\documentclass[11pt]{article}
\usepackage{gtml2025_workshop}

\usepackage{url} % 
\usepackage[utf8]{inputenc} % allow utf-8 input
\usepackage[T1]{fontenc}    % use 8-bit T1 fonts
\usepackage{hyperref}       % hyperlinks
\usepackage{url}            % simple URL typesetting
\usepackage{booktabs}       % professional-quality tables
\usepackage{amsfonts}       % blackboard math symbols
\usepackage{nicefrac}       % compact symbols for 1/2, etc.
\usepackage{microtype}      % microtypography
\usepackage{xcolor}         % colors
\usepackage{comment}

\usepackage{natbib}
\usepackage{amsmath}
\usepackage{amssymb}
\usepackage{mathtools}
\usepackage{amsthm}
\usepackage{multirow}

\usepackage{algorithm}
\usepackage{algpseudocode}

% if you use cleveref..
\usepackage[capitalize,noabbrev]{cleveref}

%%%%%%%%%%%%%%%%%%%%%%%%%%%%%%%%
% THEOREMS
%%%%%%%%%%%%%%%%%%%%%%%%%%%%%%%%
\theoremstyle{plain}
\newtheorem{theorem}{Theorem}[section]
\newtheorem{proposition}[theorem]{Proposition}
\newtheorem{lemma}[theorem]{Lemma}

\theoremstyle{definition}
\newtheorem{definition}[theorem]{Definition}
\newtheorem{example}[theorem]{Example}

\theoremstyle{remark}

\usepackage{thm-restate}

\makeatletter
\def\thmt@innercounters{equation,theorem}
\makeatother

\usepackage[colorinlistoftodos]{todonotes}

%%%%% NEW MATH DEFINITIONS %%%%%

\usepackage{amsmath,amsfonts,bm}

% Mark sections of captions for referring to divisions of figures

% Highlight a newly defined term

% Figure reference, lower-case.

% Figure reference, capital. For start of sentence

% Section reference, lower-case.

% Section reference, capital.

% Reference to two sections.

% Reference to three sections.

% Reference to an equation, lower-case.
\def\eqref#1{equation~\ref{#1}}
% Reference to an equation, upper case

% A raw reference to an equation---avoid using if possible

% Reference to a chapter, lower-case.

% Reference to an equation, upper case.

% Reference to a range of chapters

% Reference to an algorithm, lower-case.

% Reference to an algorithm, upper case.

% Reference to a part, lower case

% Reference to a part, upper case

\def\1{\bm{1}}

% Random variables

% rm is already a command, just don't name any random variables m

% Random vectors

% Elements of random vectors

% Random matrices

% Elements of random matrices

% Vectors

% Elements of vectors

% Matrix

% Tensor
\DeclareMathAlphabet{\mathsfit}{\encodingdefault}{\sfdefault}{m}{sl}
\SetMathAlphabet{\mathsfit}{bold}{\encodingdefault}{\sfdefault}{bx}{n}

% Graph

% Entries of a matrix

% entries of a tensor
% Same font as tensor, without \bm wrapper

% The true underlying data generating distribution

% The empirical distribution defined by the training set

% The model distribution

% Stochastic autoencoder distributions

 % Laplace distribution

\newcommand{\E}{\mathbb{E}}

\newcommand{\KL}{D_{\mathrm{KL}}}

% Wolfram Mathworld says $L^2$ is for function spaces and $\ell^2$ is for vectors
% But then they seem to use $L^2$ for vectors throughout the site, and so does
% wikipedia.

 % See usage in notation.tex. Chosen to match Daphne's book.

\title{The Geometry of Nonlinear Reinforcement Learning}  
% ... for Constrained Decision Problems
% Barrier Divergence Policy Mirror Descent
% Policy Mirror Descent with the Barrier Divergence
% BAD-PMD: A not so bad convex mdp solver

\author{Nikola Milosevic \\
Max Planck Institute for Human Cognitive and Brain Sciences, Leipzig\\
Center for Scalable Data Analytics and Artificial Intelligence (ScaDS.AI), Dresden/Leipzig\\
\texttt{nmilosevic@cbs.mpg.de} \\
\And
Nico Scherf \\
Max Planck Institute for Human Cognitive and Brain Sciences, Leipzig\\
Center for Scalable Data Analytics and Artificial Intelligence (ScaDS.AI), Dresden/Leipzig\\
\texttt{nscherf@cbs.mpg.de} \\
}

\date{\today}

\newcommand{\sS}{\mathcal{S}}
\newcommand{\sA}{\mathcal{A}}

\newcommand{\p}{\omega}  % so we can switch the symbol for the stationary distribution

\newcommand{\C}{\mathrm{C}}

\definecolor{cbs1}{HTML}{00A89D}

\begin{document}

\maketitle

\begin{abstract}
Reward maximization, safe exploration, and intrinsic motivation are often studied as separate objectives in reinforcement learning (RL). 
We present a unified geometric framework, that views these goals as instances of a single optimization problem on the space of achievable long-term behavior in an environment. 
Within this framework, classical methods such as policy mirror descent, natural policy gradient, and trust-region algorithms naturally generalize to nonlinear utilities and convex constraints. 
We illustrate how this perspective captures robustness, safety, exploration, and diversity objectives, and outline open challenges at the interface of geometry and deep RL.
\end{abstract}

\section{Introduction}
Classical Reinforcement Learning (RL) is formalized as maximizing the expected cumulative reward in a Markov Decision Process (MDP). While this linear formulation has driven major advances, many real-world problems demand richer objectives: respecting safety constraints~\citep{dai2023saferlhfsafereinforcement}, encouraging exploration~\citep{hazan2019provably}, or balancing multiple goals~\citep{sun2024prompt, kolev2025dual, grillotti2024quality}. 
Such requirements often correspond to \emph{nonlinear} utility functionals or convex constraints on the agent's long-term behavior~\citep{zhang2020variational,zahavy2021reward}.

A natural space to formulate these problems is the manifold of \emph{discounted state-action occupancy measures}, $\Omega$ in Figure~\ref{fig:teaser-figure}. 
In this space, standard reward maximization is a linear program~\citep{Altman1999ConstrainedMD, puterman2014markov}.
Nonlinear utilities and constraints deform this picture into a general nonlinear program, where dynamic programming no longer applies. 
However, when the utility is concave and constraints are convex, the problem remains a so-called \emph{convex MDP}, which can be solved using a combination of online learning with certain reinforcement learning algorithms~\citep{zhang2020variational, zahavy2021reward}.

The challenge, especially in deep RL, is that the mapping from policy parameters $\theta$ to the occupancy measure of a policy $\omega_{\pi_\theta}\in\Omega$ is highly non-convex. Thus the overall optimization landscape is non-convex, regardless of the utility's form, and insights from the online convex optimization literature do not necessarily apply. 
Our key insight is that the form of the utility functional does not change the problem's solvability by deep on-policy actor-critic methods. 
These algorithms already resort to local, iterative updates due to the non-convexity of the policy parameterization. 
Therefore, extending the utility from linear to nonlinear does not fundamentally change the nature of the optimization problem they solve, as long as the functional is differentiable w.r.t the occupancy measure.

\begin{figure}[ht!]
\centering
% Assuming you have this graphic locally
\includegraphics[width=\linewidth]{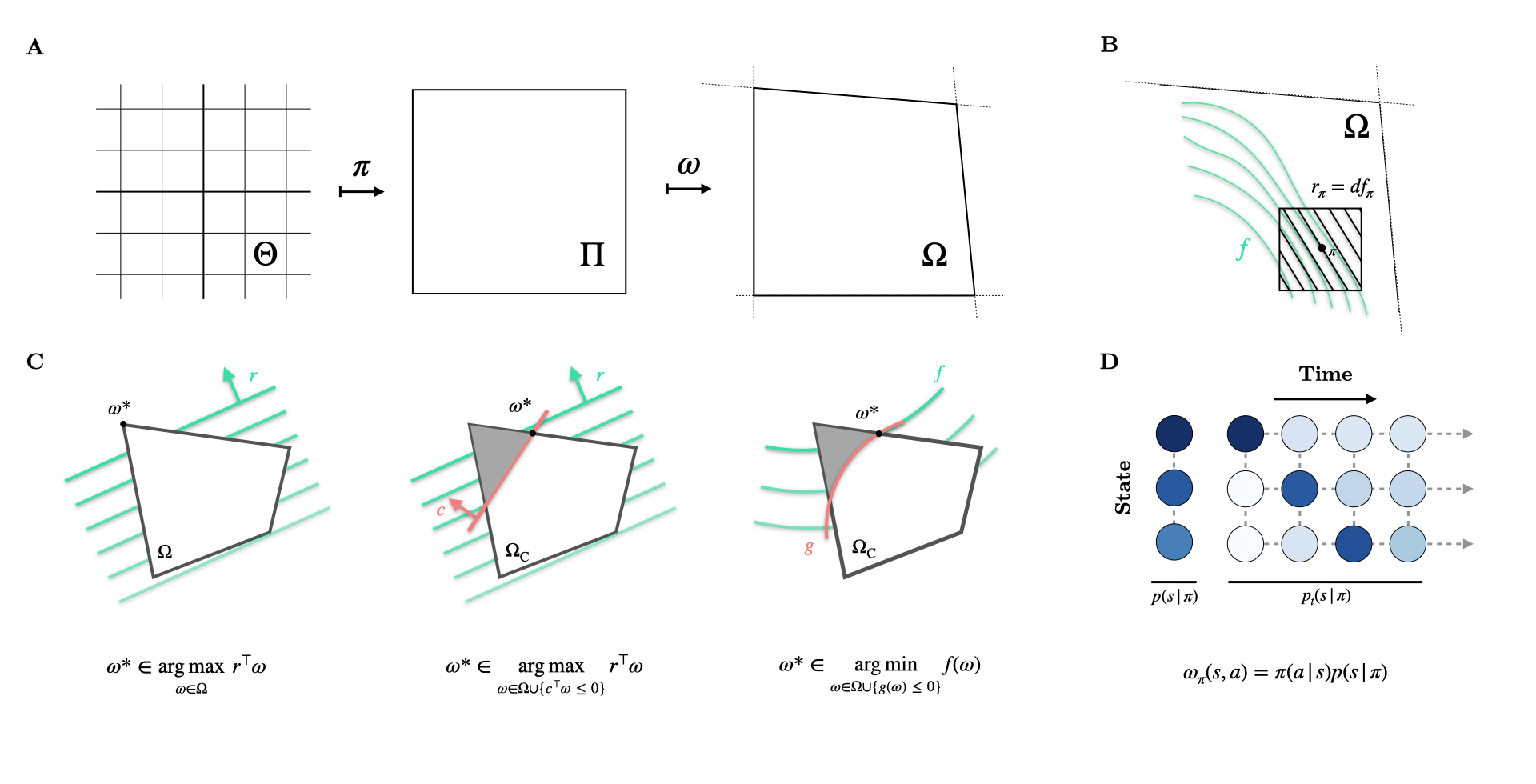}
\caption{Nonlinear MDPs are nonlinear programs on the occupancy space $\Omega$. 
A: The mapping from policy parameters $\theta$ to occupancies $\omega$ is a diffeomorphism. 
B: Nonlinear MDPs with differentiable utilities can be approximated by a linear MDP locally around the current policy $\pi$. 
        The respective reward $r_\pi$ is obtained as the differential of $f$ at $\pi$.
        C: Standard MDPs (left) and constrained MDPs (middle) are linear programs, while convex MDPs (right) are convex programs on $\Omega$.
        D: Intuitively, the occupancy measure can be understood as the probability measure obtained by ``marginalizing out'' the time variable using a geometric distribution. 
        }
\label{fig:teaser-figure}
\end{figure}

In this work, we demonstrate how this "utility-agnostic" view could be formalized geometrically and how it leads to more principled algorithm design for general-utility RL. 
Specifically, we show that standard actor-critic methods are instances of \emph{mirror descent} on the occupancy manifold. 
We then leverage the underlying Hessian geometry of this framework to generalize these methods to the nonlinear constrained case, 
yielding a practical and scalable algorithm for solving general nonlinear MDPs.

\paragraph{Related Work.}
Nonlinear formulations of reinforcement learning (RL) have attracted significant attention as they generalize classical reward maximization and allow for richer objectives and constraints. 
In the unconstrained case with a \emph{concave} utility, the problem can be reformulated as a two-player game around the standard MDP linear program~\citep{zahavy2021reward}. 
Here, a \emph{reward player} iteratively selects reward functions to minimize a payoff, while a \emph{policy player} seeks to maximize it~\citep{Miryoosefi2019convexConstraints, hazan2019provably, nachum2020reinforcement, zhang2020variational, zahavy2021reward, geist2022concaveutilityreinforcementlearning}. 
From the policy player's perspective, this corresponds to an MDP with non-stationary rewards, enabling the use of policy optimization algorithms tailored to such settings, especially methods inspired by mirror descent and proximal methods from online convex optimization~\citep{shani20optimistic, tomar2022mirror, lan2023policy}.
For completeness, \cite{zahavy2021reward} also briefly discuss convex constraints. 
However, to date, theory that treats the general constrained case, as well as solution methods for deep reinforcement learning for nonlinear utilities remain largely unexplored.
Beyond these formulations, several related works further motivate the study of nonlinear RL. 
For example, ~\cite{mutti2023challengingcommonassumptionsconvex, moreno2025onlineepisodicconvexreinforcement, santos2025the} recently examined sample complexity and practical challenges for concave utilities, \cite{gemp2025convex} study convex Markov games, 
and mirror descent has been generalized to regularized and concave-utility RL settings~\citep{geist2022concaveutilityreinforcementlearning, zhan2023policymirrordescentregularized}. 
%The duality between MDPs and probabilistic grahical models has long been recognized~\citep{todorov2008general, ziebart2008maximum, rawlik2012stochastic}, and has some important connection to the framework presented here, see Appendix ....
Finally, our approach is inspired by the Hessian geometry of optimization algorithms~\citep{duistermaat5hessian, alvarez2004hessian, raskutti2014informationgeometrymirrordescent}, which was recently applied to policy optimization~\citep{muller2023geometry, milosevic2024embedding, milosevic2025central}. 

\section{Actor-Critic Methods for Nonlinear Decision Making}

We consider a controlled Markov process $\mathcal{M}=(\sS,\sA,P,\mu,\gamma)$, see Appendix \ref{app:rl-as-linear-programming}. 
For any stationary policy $\pi: s \mapsto p(a)$, the discounted state-action occupancy measure is
\begin{equation}
\omega_\pi(s,a) \coloneqq \pi(a|s)\left[(1-\gamma)\sum_{t=0}^\infty \gamma^t p_{t}(s|\pi)\right],
\end{equation}
where $p_{t}(s|\pi)$ is the probability of state $s$ at time $t$ given policy $\pi$. 
The set of all achievable occupancies forms the \emph{occupancy manifold} $\Omega$, a convex polytope defined by a system of linear equations, see \citep{kallenberg1994survey} and Appendix \ref{occupancy-space}. 
As a subset of the probability simplex, $\Omega$ can be endowed with a (curved) Riemannian structure~\citep{amari1982differential,ay2017information}, which we will discuss below and in Appendix~\ref{app:hessian-geom}.

We define a \emph{Nonlinear MDP} as the following optimization problem on this manifold:
\begin{equation}\label{eq:n-mdp}\tag{N-MDP}
\boldsymbol{\omega^*} \in \arg \max_{\boldsymbol{\omega} \in \Omega}\, \{f(\boldsymbol{\omega}) \mid g_i(\boldsymbol{\omega}) \leq 0, \forall i \},
\end{equation}
where $f$ and $g_i$ are continuously differentiable functions, and each $g_i$ is convex. This framework subsumes standard (constrained) MDPs, where $f, g_i$ are linear~\citep{Altman1999ConstrainedMD}, and convex MDPs~\citep{zahavy2021reward} when $f$ is concave and all $g_i$ are convex. 
Otherwise, local methods are required. This formulation is rich enough to express objectives from imitation learning~\citep{ho2016generative}, exploration~\citep{hazan2019provably}, safety~\citep{achiam2017constrained}, and control-as-inference~\citep{toussaint2006probabilistic, ziebart2008maximum,rawlik2012stochastic}.

\paragraph{Actor-Critic as Mirror Descent}
Our central claim is that modern actor-critic algorithms are fundamentally geometric and utility-agnostic. They can be viewed as mirror descent on the occupancy manifold $\Omega$, implemented entirely in policy space. 
The general mirror descent update step to maximize a utility $f$ on $\Omega$ is:
\begin{equation}\label{eq:pmd}
\omega_{\pi_{k+1}} \in \arg\max_{\omega_\pi \in \Omega} \langle \nabla f(\omega_{\pi_k}),\ \omega_{\pi}-\omega_{\pi_k} \rangle - \eta_k^{-1} D_\phi(\omega_\pi\|\omega_{\pi_k}),
\end{equation}
where the first term is a linear approximation of the change in utility on $\Omega$, and the second term is a Bregman divergence $D_\phi$ that regularizes the size of the update step according to a geometry induced by a potential function $\phi$ on $\Omega$.
Commonly used proximal actor-critic methods like PPO~\citep{schulman2017proximalpolicyoptimizationalgorithms}, TRPO~\citep{schulman2017trust} and more recent improvements~\citep{peng2019advantageweightedregressionsimplescalable, abdolmaleki2018maximum, haarnoja2024learning} solve the surrogate update:
\begin{align}\label{eq:MDPO}
\pi_{k+1} \in \arg \max_{\pi \in \Pi} \mathbb{E}_{s\sim\omega_k, a\sim\pi}[A_{\pi_k}(s,a)] - \eta_k^{-1} \mathbb{E}_{s\sim\omega_k} \mathrm{KL}(\pi_k(\cdot|s)\,\|\,\pi(\cdot|s)),
\end{align}
where the surrogate advantage term, $\mathbb{E}[A_{\pi_k}(s,a)]$, is a first-order approximation of the utility change. Further, the KL regularization term corresponds to the Bregman divergence in \eqref{eq:pmd} generated by the \emph{negative conditional entropy} potential, $\phi(\omega) = \sum_{s,a}\omega(s,a)\log\pi(a|s)$, see~\citep{neu2017unified}.
The equivalence arises because $\mathbb{E}[A_{\pi_k}(s,a)]$ is precisely the utility change, $\langle \nabla f(\omega_{\pi_k}), \omega_{\pi}-\omega_{\pi_k} \rangle$ up to first order in $\pi$ 
if we use an \emph{intrinsic reward} of the form $r_k(s,a) = \partial f(\omega_{\pi_k}/\partial_\omega$, which are the components of $\textup{d}f_{\pi_k}$, the differential of $f$ at $\pi_k$. We refer the reader to Appendix~\ref{app:mirror-descent-equivalence} for a more precise statement.

\section{The Geometry of Nonlinear Actor-Critic Methods}
\begin{figure}[ht!]
    \centering
    \includegraphics[width=\textwidth]{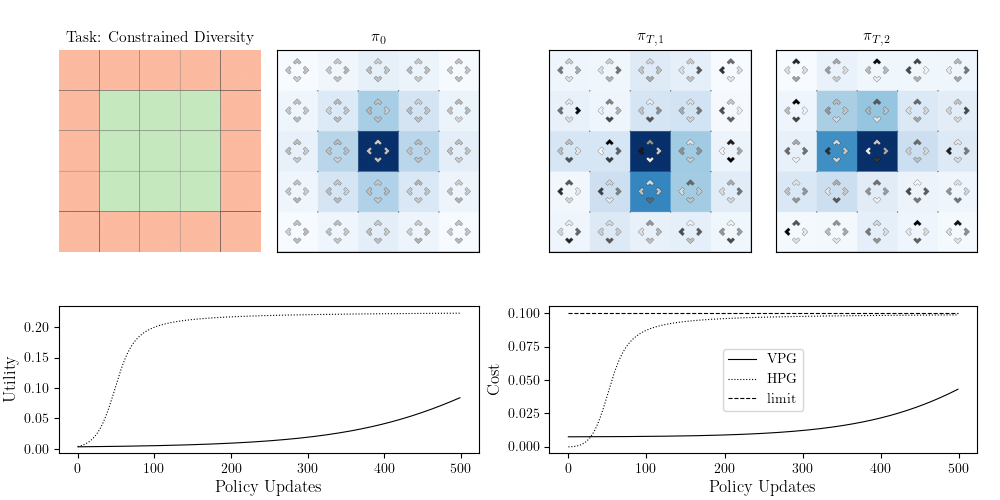}
    \caption{A nonlinear MDP in a 5x5 open gridworld environent. The task is to approximately imitate an initial policy ($\pi_0$) that visits the green squares often and avoids the red squares (top left), implemented as a constraint (0.1 bits in Jensen-Shannon divergence). 
    The imitation should be performed using a diverse mixture of policies (2 in this case).
    The diversity metric is a commonly used convex utility, measuring the mutual information between a binary policy label and the resulting state distribution.
    A close-to-optimal solution is shown on the top right, obtained with a Hessian optimization approach (HPG, see main text), and compared with the vanilla Lagrangian policy gradient (VPG) in the bottom figures.}
    \label{fig:gridworld_cmdp}
\end{figure}
This equivalence reveals the utility-agnostic and inherently geometric nature of actor-critic methods. 
The proximal policy update Eq. \ref{eq:MDPO} only depends on the differential of $f$ at the current $\pi$, not on its global properties.
This can be used to turn nonlinear MDPs into a collection of locally linear MDPs using the differential $\textup{d}f_\pi$ as the linear functional, i.e. using the components of the differential as the reward function entries: $r_\pi(s,a) = (\partial f/\partial_\omega)(s,a)$.
Whether $f$ is linear or nonlinear, proximal actor-critic algorithms perform the same local linearization and regularized update steps in policy parameter space. 
Only Q-Learning does not work out-of-the box, because the Bellman optimality equation does not refer to the optimal policy of the \ref{eq:n-mdp}.
Since the mapping $\theta \mapsto \pi \mapsto \omega_\pi \mapsto f(\omega_\pi)$ is highly non-convex, we are always seeking a local optimum. 
Thus, proximal methods are naturally suited to solve general N-MDPs, or at least as well as they are for solving linear MDPs.

\paragraph{Geometry Matters.} 
While the nonlinearity of the utility function does not change the nature of the policy optimization problem, considering its underlying geometry can indeed improve convergence properties and stability of the updates. 
To achieve stable updates, we must ensure proximity to the previous iterate, so the MDP remains locally linear. 
Because changes in utilities and constraints in the \ref{eq:n-mdp} are directly affected only by the occupancy,
the geometry of occupancy space is what ultimately matters for for proximity.
We argue that the natural geometry to consider is the \emph{Hessian} geometry induced by a particular type of convex potential $\phi: \Omega \to \mathbb{R}\cap\{\infty\}$, see Appendix~\ref{app:hessian-geom}.
We demonstrate in Figure \ref{fig:gridworld_cmdp} the effectiveness of the resulting \emph{Hessian Policy Gradient} (HPG) for a \emph{constrained diversity} problem, see Appendix~\ref{app:implementation-details} for details.

\section{Conclusion and Outlook}
We have outlined a geometric framework for solving nonlinear and constrained MDPs via implicit optimization on occupancy space. This unifies reward maximization, safety, and intrinsic motivation objectives under a common optimization view, enabling principled algorithm design.
A possible future direction is to extend the geometric theory, especially to continuous spaces as in e.g.~\citep{ay2017information, aubin2022mirror},
and to use geometric arguments to derive convergence~\citep{muller2023geometry} and policy improvement guarantees~\citep{schulman2017trust} for parametric policies. 
Another crucial aspect is to understand the practical implications of the nonlinear problem domain for deep function approximation. 
Scalable off-policy, model-based, and offline variants are required to make N-MDP solution tractable and reliable in practice, and respective theories must accompany them. 
Off-policy and offline methods are particularly interesting, as they leverage Q-Learning and experience replay to improve sample efficiency, which requires either understanding how Q-Learning can be properly localized, or by addressing the general existence of Bellman operators, see e.g.~\citep{neu2017unified, geist2019theory}.

\bibliographystyle{plainnat}
\bibliography{gtml2025_workshop}

\appendix

\section{Reinforcement Learning as Linear Programming}\label{app:rl-as-linear-programming}
Every stationary policy $\pi$ in a CMP induces a discounted state-action occupancy measure $\omega_\pi\in\Omega\subset\Delta_{\mathcal S\times\mathcal A}$, which indicates the relative frequencies of visiting a state-action pair, discounted by how far the event lies in the future. We will refer to this measure as the \emph{state-action occupancy} for short, and its marginal in the state variable will be called the \emph{state occupancy}.
\begin{definition}
    The \emph{state-action occupancy measure} is defined as
    \begin{equation}
        \omega_\pi(s)\coloneqq (1-\gamma)
        \sum_{t=0}^\infty \gamma^t \mathbb{P}_\pi (s_t = s)\pi(a|s),
    \end{equation}
    where $\mathbb{P}_\pi (s_t = s)$ is the probability of observing the environment in state $s$ at time $t$ given the agent follows policy $\pi$.
\end{definition}
Note that similar measures can be introduced for the average-reward setting \citep{zahavy2021reward}, and for arbitrary state-action spaces~\citep{laroche23occupancy}. The following property of the occupancy underscores its utility in policy optimization.

\begin{lemma}\label{eq:discounted-expectation}
Given trajectories $\tau=(s_0,a_0,s_1,a_1,...,s_T)$ generated in the CMP $(\sS,\sA,P,\mu,\gamma)$ with policy $\pi$, for a bounded function $f:\sS\times\sA\to\mathbb{R}$ it holds that
    \begin{equation}
        (1-\gamma)\mathbb{E}_{\tau \sim \pi,\mu} \left[ \sum_{t=0}^\infty \gamma^t f(s_t, a_t) \right] = \mathbb{E}_{s, a \sim d^\mu_\pi} [f(s, a)]
    \end{equation}
\end{lemma}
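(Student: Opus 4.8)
The plan is to expand the trajectory expectation termwise, interchange the infinite sum with the expectation, and then recognize the resulting discounted time-average as an expectation under the occupancy measure $\omega_\pi = d^\mu_\pi$ from the Definition above. Concretely, I would first write, by linearity of expectation applied to each term,
\begin{equation*}
(1-\gamma)\,\mathbb{E}_{\tau\sim\pi,\mu}\!\left[\sum_{t=0}^\infty \gamma^t f(s_t,a_t)\right] = (1-\gamma)\sum_{t=0}^\infty \gamma^t\,\mathbb{E}_{\tau\sim\pi,\mu}[f(s_t,a_t)].
\end{equation*}

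The one step that requires genuine care — and the main obstacle — is this interchange of the infinite sum and the expectation. Since $f$ is bounded, say $|f|\le M$, the partial sums $\sum_{t=0}^{N}\gamma^t f(s_t,a_t)$ are dominated uniformly in $N$ by the integrable constant $\sum_{t\ge 0}\gamma^t M = M/(1-\gamma)$, so the dominated convergence theorem (equivalently, Fubini--Tonelli for the product of the trajectory law with the finite measure on $\mathbb{N}$ assigning mass $\gamma^t$ to $t$) justifies the swap. Everything after this is bookkeeping. If the trajectory is genuinely finite, $\tau=(s_0,\dots,s_T)$ with $T<\infty$, I would additionally note that truncating the sum at $T$ changes it by at most $\gamma^{T}M/(1-\gamma)$, so the finite- and infinite-horizon discounted returns agree in the relevant limit; I will treat $T=\infty$ for clarity.

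Next I would evaluate the per-step term. The quantity $\mathbb{E}_{\tau\sim\pi,\mu}[f(s_t,a_t)]$ depends only on the joint law of $(s_t,a_t)$, which by the CMP dynamics factors as $\mathbb{P}_\pi(s_t=s)\,\pi(a|s)$; hence $\mathbb{E}_{\tau\sim\pi,\mu}[f(s_t,a_t)] = \sum_{s,a}\mathbb{P}_\pi(s_t=s)\,\pi(a|s)\,f(s,a)$. Substituting back and interchanging the (absolutely convergent) sums over $t$ and over $(s,a)$ gives
\begin{equation*}
\sum_{s,a}\left[(1-\gamma)\sum_{t=0}^\infty \gamma^t\,\mathbb{P}_\pi(s_t=s)\right]\pi(a|s)\,f(s,a) = \sum_{s,a}\omega_\pi(s,a)\,f(s,a),
\end{equation*}
since the bracketed factor times $\pi(a|s)$ is exactly $\omega_\pi(s,a)$ by the Definition. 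Identifying $\omega_\pi = d^\mu_\pi$, the right-hand side is $\mathbb{E}_{(s,a)\sim d^\mu_\pi}[f(s,a)]$, which completes the argument.
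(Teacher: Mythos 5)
Your proof is correct and follows essentially the same route as the paper's: expand termwise, justify the interchange of the infinite sum with the expectation, factor the joint law of $(s_t,a_t)$ as $\mathbb{P}_\pi(s_t=s)\pi(a|s)$, and reassemble the discounted time-sum into the occupancy measure. Your justification of the interchange (dominated convergence via the bound $M/(1-\gamma)$) is in fact slightly more careful than the paper's appeal to uniform convergence, and your remark on the finite-horizon truncation is a reasonable extra, but nothing substantive differs.
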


\begin{proof}
    \begin{align}
        (1-\gamma)\mathbb{E}_{\tau \sim \pi,\mu} \left[ \sum_{t=0}^\infty \gamma^t f(s_t, a_t) \right] &= (1-\gamma)\sum_{t=0}^\infty \gamma^t \mathbb{E}_{s_t, a_t \sim \pi,\mu} [f(s_t, a_t)]\\
        &= (1-\gamma)\sum_{t=0}^\infty \gamma^t \sum_{s,a}\mathbb{P}(s_t=s,a_t=a) f(s, a)\\
        &= (1-\gamma)\sum_{t=0}^\infty \gamma^t \sum_{s,a}\mathbb{P}(s_t=s)\pi(a|s) f(s, a)\\
        &= \sum_{s,a}(1-\gamma)\sum_{t=0}^\infty \gamma^t \mathbb{P}(s_t=s)\pi(a|s) f(s, a)\\
        &= \mathbb{E}_{s, a \sim \omega^\mu_\pi} [f(s, a)],
    \end{align}
where we were able to swap the order of the infinite sum and expectation in the first line, since the sum converges uniformly, and $\mathbb{P}(s_t=s,a_t=a)$ is the probability of observing the state $s$ and the action $a$ at time $t$ given the initial distribution $\mu$ and the policy $\pi$.
\end{proof}

\begin{lemma}
The occupancy can also be written as 
\begin{align}
   \omega_\pi(s,a) &= (1-\gamma)\mathbb{E}_{s',a' \sim \omega_\pi} \left[ \delta_{s,a}(s', a') \right]
\end{align}
and
\begin{align}
   \omega_\pi(s,a) &= (1-\gamma)\mathbb{E}_{\tau \sim \pi,\mu} \left[ \sum_{t=0}^\infty \gamma^t \delta_{s,a}(s_t, a_t) \right],
\end{align}
where $\delta_{s,a}$ is the indicator or Dirac distribution at $(s,a)$.
\end{lemma}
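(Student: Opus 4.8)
The plan is to derive both identities by unpacking the definition of the occupancy measure and specialising the already-proved discounted-expectation identity (Lemma~\ref{eq:discounted-expectation}) to an indicator test function. Fix a target pair $(s,a)$ and apply Lemma~\ref{eq:discounted-expectation} with $f \coloneqq \delta_{s,a}$, which is bounded ($0 \le \delta_{s,a} \le 1$) and hence admissible. Its left-hand side becomes $(1-\gamma)\,\mathbb{E}_{\tau\sim\pi,\mu}\!\big[\sum_{t=0}^\infty \gamma^t \delta_{s,a}(s_t,a_t)\big]$, i.e.\ exactly the trajectory form we want, and its right-hand side becomes $\mathbb{E}_{s',a'\sim\omega_\pi}\!\big[\delta_{s,a}(s',a')\big]$. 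So the second identity reduces to showing this last quantity equals $\omega_\pi(s,a)$.

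That step is the sifting property of a point mass: expanding the expectation against the discrete occupancy, $\mathbb{E}_{s',a'\sim\omega_\pi}\!\big[\delta_{s,a}(s',a')\big] = \sum_{s',a'} \omega_\pi(s',a')\,\delta_{s,a}(s',a') = \omega_\pi(s,a)$, since every term with $(s',a')\neq(s,a)$ vanishes. This already establishes the first identity directly, and chaining it with the previous display gives the trajectory form. I would also double-check the bookkeeping of the factor $(1-\gamma)$: it is exactly what normalises the geometric weights $\gamma^t$ in the trajectory form, whereas the $\omega_\pi$-expectation form reproduces $\omega_\pi(s,a)$ without any extra constant, so the normalisation should sit only on the trajectory side.

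I expect the only genuine subtlety to be the step from finite to general state-action spaces, flagged by the paper's phrase \emph{indicator or Dirac distribution}. When $\sS\times\sA$ is uncountable, $\delta_{s,a}$ is not a bounded function, so Lemma~\ref{eq:discounted-expectation} does not apply verbatim and $\omega_\pi(s,a)$ must be read as a density. The clean route is to argue in weak form: test both sides against an arbitrary bounded continuous $\varphi$, for which Lemma~\ref{eq:discounted-expectation} applies as stated and yields $(1-\gamma)\,\mathbb{E}_{\tau}\!\big[\sum_t \gamma^t \varphi(s_t,a_t)\big] = \mathbb{E}_{\omega_\pi}[\varphi]$, and then invoke uniqueness of the measure representing this linear functional. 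Alternatively one approximates $\delta_{s,a}$ by a sequence of bumps $\varphi_n$ and passes to the limit, the interchange with the infinite $t$-sum being justified, just as in the proof of Lemma~\ref{eq:discounted-expectation}, by the uniform convergence of the geometric series. In the finite setting used throughout the paper, none of this is needed and the argument is the two-line computation above.
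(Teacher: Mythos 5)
The paper states this lemma without proof, so there is no official argument to compare against; your derivation --- specialise Lemma~\ref{eq:discounted-expectation} to the bounded test function $f=\delta_{s,a}$ and then use the sifting identity $\mathbb{E}_{s',a'\sim\omega_\pi}[\delta_{s,a}(s',a')]=\sum_{s',a'}\omega_\pi(s',a')\,\delta_{s,a}(s',a')=\omega_\pi(s,a)$ --- is correct and is clearly the intended one. You are also right to question the bookkeeping of the prefactor: as printed, the first display is inconsistent, because the $\omega_\pi$-expectation already equals $\omega_\pi(s,a)$ on the nose, so multiplying it by $(1-\gamma)$ yields $(1-\gamma)\,\omega_\pi(s,a)$ rather than $\omega_\pi(s,a)$. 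The normalising constant belongs only on the trajectory form, where it compensates for the unnormalised geometric weights $\gamma^t$; this is a typo in the lemma's statement, not a gap in your argument. Your closing discussion of the uncountable case (weak formulation or bump approximation) is a sensible caveat but is not needed here, since the surrounding appendix works with finite state-action spaces, where the two-line computation suffices.
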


The state-action occupancy is useful, since it lets us abstract away the recursive structure of the expected return, and simplify expressions that involve expectations of discounted infinite sums in time. Instead, we can focus on the optimization problems at hand. For finite MDPs, it is well-known that an optimal policy can be found, by first identifying an occupancy that maximizes the expected discounted return
\begin{equation}\label{eq:lin-program}
    \omega^* \in \max_\omega\, \mathbb{E}_{s,a\sim \omega}[r(s,a)]
    \quad \text{subject to } \omega \in\Omega,
\end{equation}
and then conditioning on the state to obtain an optimal policy $\pi^*(s,a) = \omega^*(s,a)/\sum_{a}\omega^*(s,a)$.
Here, $\Omega$ is the set of feasible state-action measures given the CMP parameters~\citep{feinberg2012handbook}.
Next, we want to characterize the state-action space $\Omega$ in more detail, and show that it is a convex polytope, which can be described by a set of linear constraints.

\section{Characterizing state-action space}\label{occupancy-space}
The set of state-action occupancies which are feasible under the CMP dynamics forms a convex polytope in $\mathbb{R}^{|\mathcal{S}||\mathcal{A}|}$ \citep{kallenberg1994survey}, which is characterized as the solution set of the \emph{Bellman flow equations}.

\begin{proposition}[Bellman flow constraints]\label{thm:bellman-flow}
The occupancy of the stationary policy $\pi$ in the CMP with parameters $\mu,\gamma,P$ must be a probability measure over the sample space $\sS\times\sA$, and it must satisfy the Bellman flow equations
\begin{align*}
    \sum_a\omega(s,a) = (1-\gamma) \mu(s) +
    \gamma \sum_{s', a'}\omega(s', a')p(s|s',a'),
\end{align*}
which is a linear system of equations in the variables $\omega(s,a)$, for all $s\in\mathcal{S}$ and $a\in\mathcal{A}$. 
We can write
\begin{equation}\label{eq:bellman-flow}
    \Omega = \{\boldsymbol{\omega}: \boldsymbol{\omega} \in \mathbb{R}^{|\sS||\sA|}_{\geq 0},\ \boldsymbol{B}\boldsymbol{\omega} = \boldsymbol{\mu}\},
\end{equation}
where $\boldsymbol{B}$ is a matrix that depends only on the CMP parameters.
\end{proposition}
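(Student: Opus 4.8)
The plan is to prove both inclusions in the claimed description of $\Omega$, since the substance of the proposition is that the linear description is exact and not merely necessary. For the easy inclusion $\Omega\subseteq\{\boldsymbol\omega\ge 0:\boldsymbol B\boldsymbol\omega=\boldsymbol\mu\}$, I start from the defining formula for $\omega_\pi$. Nonnegativity is immediate. Writing $d_\pi(s)=(1-\gamma)\sum_{t\ge 0}\gamma^t\,\mathbb P_\pi(s_t=s)=\sum_a\omega_\pi(s,a)$, I isolate the $t=0$ term, which contributes $(1-\gamma)\mu(s)$, reindex the tail $t\mapsto t+1$, and substitute the one-step recursion $\mathbb P_\pi(s_{t+1}=s)=\sum_{s',a'}\mathbb P_\pi(s_t=s')\,\pi(a'\mid s')\,p(s\mid s',a')$ coming from the Markov property of the CMP. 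Interchanging the sums over $t$ and over $(s',a')$ — legitimate since every term is nonnegative — regroups the tail as $\gamma\sum_{s',a'}\omega_\pi(s',a')\,p(s\mid s',a')$, which is exactly the stated flow equation. Summing these $|\sS|$ equations over $s$ collapses to $(1-\gamma)\sum_{s,a}\omega_\pi(s,a)=(1-\gamma)$, so the probability-measure requirement is automatically encoded in the linear system; reading off the coefficient of $\omega(s',a')$ in the $s$-th equation as $[\,s'=s\,]-\gamma p(s\mid s',a')$ shows $\boldsymbol B$ depends only on $\gamma,P$ (and the right-hand side only on $\mu,\gamma$).

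For the converse inclusion I take any $\boldsymbol\omega$ with $\boldsymbol\omega\ge 0$ and $\boldsymbol B\boldsymbol\omega=\boldsymbol\mu$ and exhibit a stationary policy realizing it. Set $d(s):=\sum_a\omega(s,a)$ and define the induced policy $\pi_\omega(a\mid s):=\omega(s,a)/d(s)$ wherever $d(s)>0$ and arbitrarily where $d(s)=0$. The flow equations say precisely that $d$ is a fixed point of the affine map $T\colon v\mapsto(1-\gamma)\mu+\gamma\,(P^{\pi_\omega})^{\top}v$, where $P^{\pi_\omega}$ is the state-to-state transition kernel under $\pi_\omega$. The key step is uniqueness of this fixed point: $P^{\pi_\omega}$ is row-stochastic, so $\gamma P^{\pi_\omega}$ has spectral radius $\le\gamma<1$, hence $I-\gamma P^{\pi_\omega}$ is invertible (equivalently, $T$ is a contraction on $\ell^1$). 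Since the genuine state occupancy $d_{\pi_\omega}$ of $\pi_\omega$ is also a fixed point of $T$ by the forward direction, we conclude $d=d_{\pi_\omega}$. Therefore, at every $s$ with $d(s)>0$, $\omega_{\pi_\omega}(s,a)=\pi_\omega(a\mid s)\,d_{\pi_\omega}(s)=\pi_\omega(a\mid s)\,d(s)=\omega(s,a)$; and at states with $d(s)=0$ both $\omega(s,\cdot)$ and $\omega_{\pi_\omega}(s,\cdot)$ vanish, the latter because $d_{\pi_\omega}(s)=d(s)=0$. Hence $\boldsymbol\omega=\omega_{\pi_\omega}\in\Omega$.

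Finally, the set $\{\boldsymbol\omega\in\mathbb R^{|\sS||\sA|}_{\ge 0}:\boldsymbol B\boldsymbol\omega=\boldsymbol\mu\}$ is the intersection of the nonnegative orthant with an affine subspace, and it is bounded because the normalization derived above forces it into the probability simplex; so it is a convex polytope, and by the two inclusions it equals $\Omega$.

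I expect the forward direction to be routine once the interchange of the infinite sum with the finite sum over $(s',a')$ is justified (Tonelli, by nonnegativity). The main obstacle is the converse inclusion: cleanly arguing that the flow equation has a unique solution for the fixed induced policy — i.e. invertibility of $I-\gamma P^{\pi_\omega}$ — and correctly handling the states of zero occupancy, where the induced policy is underdetermined but both occupancies are forced to vanish.
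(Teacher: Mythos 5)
Your proof is correct, and it is in fact more complete than the one given in the paper. For the forward inclusion $\Omega\subseteq\{\boldsymbol\omega\ge 0:\boldsymbol B\boldsymbol\omega=\boldsymbol\mu\}$ you do essentially what the paper does: split off the $t=0$ term, reindex the tail, apply the one-step Markov recursion, and interchange the time sum with the sum over $(s',a')$ (you justify this by Tonelli/nonnegativity, the paper by uniform convergence --- both are fine), then sum over actions. The substantive difference is that the paper stops there: it only establishes that every policy's occupancy satisfies the constraints, and implicitly delegates the reverse inclusion --- that every nonnegative solution of $\boldsymbol B\boldsymbol\omega=\boldsymbol\mu$ is realized by some stationary policy --- to the cited literature (Kallenberg). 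You supply that converse explicitly via the standard argument: define the induced policy $\pi_\omega(a\mid s)=\omega(s,a)/\sum_a\omega(s,a)$, observe that the state marginal of any solution is a fixed point of the affine map $v\mapsto(1-\gamma)\mu+\gamma(P^{\pi_\omega})^{\top}v$, and use invertibility of $I-\gamma P^{\pi_\omega}$ (spectral radius at most $\gamma<1$) to force it to coincide with the true occupancy of $\pi_\omega$; your handling of zero-occupancy states, where the induced policy is underdetermined but both measures vanish, is also correct. Since the proposition asserts a set equality, your version actually proves the stated claim in full, whereas the paper's proof covers only the necessity direction; nothing in your argument needs repair.
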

\begin{proof}
    Recall that
\begin{align*}
    \omega_\pi(s,a) &= \pi(a|s)\left[(1-\gamma)\sum_{t=0}^\infty \gamma^t \mathbb{P}_\pi (s_t = s)\right],
\end{align*}
which is a probability measure, since $\pi(a|s)$ is a probability measure over actions given the state, $\mathbb{P}_\pi (s_t = s)$ is the probability of observing the state $s$ at time $t$ under the policy $\pi$, and $(1-\gamma)\gamma^t$ is the geometric distribution over time-steps.
This means that we can write $\omega_\pi(s,a) = p(a|s,\pi) \sum_{t=0}^{\infty} p(s|t,\pi) p(t|\gamma)$, which is clearly a probability distribution obtained by marginalization.

Note that, due to the markov property of the CMP, we can write the probability of observing the state $s$ at time $t$ as
\begin{align*}
    \mathbb{P}_\pi (s_t = s) &= \sum_{s',a'}\mathbb{P}_\pi (s_t = s|s_{t-1} = s', a_{t-1}=a')\mathbb{P}_\pi (s_{t-1} = s', a_{t-1}=a')\\
    &= \sum_{s',a'}P(s|s',a')\pi(a'|s')\mathbb{P}_\pi (s_{t-1} = s'),\\
\end{align*}
where $P(s|s',a')$ is the transition kernel of the CMP.
Hence, splitting into the first time-step and the rest of the trajectory, we can write
\begin{align*}
    \omega_\pi(s,a) &= \pi(a|s)\left[(1-\gamma)\mu(s) + (1-\gamma)\sum_{t=1}^\infty \gamma^t \mathbb{P}_\pi (s_t = s),\right]\\
    &= \pi(a|s)\left[(1-\gamma) \mu(s) + \gamma \sum_{s',a'}P(s|s',a')\left[(1-\gamma)\sum_{t'=0}^\infty \gamma^{t'}\mathbb{P}_\pi (s_{t'} = s')\pi(a'|s')\right]\right]\\
    &= \pi(a|s)\left[(1-\gamma)\mu(s) + \gamma \sum_{s',a'}P(s|s',a')\omega(s',a')\right],
\end{align*}
where we re-set the time index to $t'=t-1$ in the second line, and swapped again expectation and uniformly convergent sum.
The Bellman flow equations are obtained by a sum over actions
\begin{align*}
   \sum_a \omega(s,a) &= (1-\gamma)\mu(s) + \gamma \sum_{s',a'}p(s|s',a')\omega(s',a').
\end{align*}

For the final claim, we rewrite the equations in matrix form:
\begin{equation}
    \boldsymbol{B}\boldsymbol{\omega} = \boldsymbol{\mu}.
\end{equation}
Here, $\boldsymbol{B} = (1-\gamma)^{-1}(\boldsymbol{1}_{a}-\gamma\boldsymbol{P})$ is a $|\sS|\times|\sS||\sA|$-matrix with $\boldsymbol{1}_{a}$ being a $|\sS|\times|\sS||\sA|$-matrix with ones along state-action combinations with the same state, and $\boldsymbol{P}$ is the $|\sS|\times|\sS||\sA|$-matrix of the Markov kernel.
\end{proof}
The Bellman flow equations are thus a linear system of equations in the variables $\omega(s,a)$, for all $s\in\mathcal{S}$ and $a\in\mathcal{A}$, which can be solved by standard linear programming techniques, e.g. the simplex method or interior point methods.
The occupancy measure $\omega$ is a probability measure, i.e. $\omega(s,a)\ge 0$ and $\sum_{s,a}\omega(s,a)=1$, which is equivalent to the constraints $\boldsymbol{\omega} \in \Delta(\sS\times\sA)$, where $\Delta(\sS\times\sA)$ is the probability simplex over the state-action pairs.
The Bellman flow equations can be interpreted as a set of constraints on the occupancy measure, which
ensure that the probability mass remains consistent with the transition probabilities. 

Note that we can express the expected discounted return as
\begin{equation}\mathbb{E}_{s, a \sim d^\mu_\pi} [r(s, a)] = \boldsymbol{r}^\top \boldsymbol{\omega}\end{equation}
where $\boldsymbol{d},\boldsymbol{f}\in\mathbb{R}^{|\sS||\sA|}$, up to multiplication of $(1-\gamma)^{-1}$. 
Since the Bellman flow equations and the occupancy matching problem from above are linear in $\boldsymbol{d}$, we can express it as a linear program
\begin{equation}\label{eq:lin-mdp-vector}
    \boldsymbol{\omega^*} \in \arg \max_{\boldsymbol{\omega}}\, \{\boldsymbol{r}^\top \boldsymbol{\omega}
    \mid \boldsymbol{\omega} \in \Delta(\sS\times\sA),\ \boldsymbol{B}\boldsymbol{\omega}=\boldsymbol{\mu}\}.
\end{equation}

A common geometric interpretation, see e.g. ~\citep{ay2017information}, is of $\boldsymbol{r}$ as a covector and $\boldsymbol{\omega}$ as a vector in $\mathbb{R}^{|\mathcal{S}||\mathcal{A}|}$, the expectation being the standard inner product between the function (covector) and the occupancy probability measure (vector). Here, $\boldsymbol{r}$ can also be thought of as a constant one-form acting on the space of occupancies.
In the nonlinear utility case, $df_\pi$ is a nonlinear one-form that we will use to define a \emph{local} or \emph{intrinsic} reward $r_\pi := df_\pi$ which depends on the policy and define a collection of locally linear MDPs on the tangent space $T\Omega$.

\section{Successor Representation} 
The occupancy is related to the well-known \emph{successor representation}~\citep{dayan1993improving} of the policy in an MDP, and it will play a role in the occupancy gradient lemma below.

\begin{definition}[Successor Representation]
The successor representation is defined as
\begin{equation}
    M^\pi(s,a|s',a') \coloneqq \mathbb{E}_\pi \left[\sum_{t=0}^\infty \gamma^t \delta_{s',a'}(s_t,a_t)|s_0=s,a_0=a\right],
\end{equation}
where $\delta_{s',a'}(s_t,a_t)$ is the indicator function that is $1$ if $(s_t,a_t)=(s',a')$ and $0$ otherwise.
\end{definition}

The successor representation and the occupancy measure are equivalent up to the initial state distribution $\mu$, meaning

\begin{equation}
    \omega_{\pi,\mu}(s,a) = (1-\gamma)\sum_{s',a'}M_\pi(s,a|s',a')\pi(a'|s')\mu(s')
\end{equation}
and euqivalently, the successor representation can be expressed as a conditional occupancy measure
\begin{equation}
    M^\pi(s,a|s',a') = (1-\gamma)^{-1}\omega_{\pi,\delta_{s',a'}}(s,a).\
\end{equation}

Intuitively, the successor representation tells us the \emph{discounted} probability of witnessing the event $(s,a)$ conditional on starting at $(s',a')$ and following $\pi$ thereafter. It follows, that the successor representation must also satisfy a Bellman flow equation

\begin{equation}
    \omega(s,a|s',a') = (1-\gamma)\delta_{s',a'}(s,a) + \gamma\sum_{s',a'}\pi(a|s)P(s|s',a')\omega(s',a'|s,a),
\end{equation}
or
\begin{equation}
    M(s,a|s',a') = \delta_{s',a'}(s,a) + \gamma\sum_{s',a'}\pi(a|s)P(s|s',a')M(s',a'|s,a),
\end{equation}

which is the backward Bellman equation for M, see also \citep{touati2021learning}.

One can demonstrate several interesting properties of the SR including that it has the following closed-form matrix-valued expression in finite MDPs
\begin{equation}
    \mathbf M^\pi = [\mathbf I - \gamma \mathbf P^\pi]^{-1}
\end{equation}
where $\mathbf M^\pi$, and $\mathbf P^\pi$ are $|S||A|\times|S||A|$ matrices, and $\mathbf I$ is the indicator matrix of the proper dimensions. 
The inverse exists for $0\leq\gamma < 1$ and equivalently the Neumann series for $\gamma \mathbf P^\pi$ converges to $\mathbf M^\pi$
\begin{equation}
    \mathbf M^\pi = [\mathbf I + \gamma \mathbf P^\pi + (\gamma \mathbf P^\pi)^2+...].
\end{equation}
As the return of a policy can be obtained as an inner product between occupancy and reward, the equivalent relation holds for $M$ and the on-policy Q-Function:
\begin{equation}
    Q_\pi(s,a) = \sum_{s',a'}r(s',a')M_\pi(s',a'|s,a).
\end{equation}

The successor representation features prominently in the study of neural representations~\cite{gershman2018successor}, as well as in transfer learning \citep{barreto2017successor} and intrinsic control algorithms \citep{hansen2019fast} in deep reinforcement learning.
Interestingly, it also appears in the differential of the map between policy and state-action spaces. 
The following result will be relevant in the general version of the policy gradient theorem below, and has been derived as an expression for the jacobian of the map $\pi\mapsto\omega$.

\begin{theorem}[Occupancy Gradient Theorem, Lemma 3.10 in~\citep{muller2024geometry}]
    Let $\pi$ be a stationary policy in the finite CMP $(\sS,\sA,P,\mu,\gamma)$, then the gradient of the occupancy measure w.r.t. the policy parameters $\theta$ is given by
    \begin{equation}\label{eq:occupancy-gradient}
    \nabla_\theta \omega_\pi(s,a) = (1-\gamma)\mathbb{E}_{s',a' \sim \omega} \left[ \nabla_\theta \log\pi_\theta(a'|s')M_\pi(s,a|s',a')\right],
    \end{equation}
    where $\nabla_\theta$ is the gradient w.r.t. the policy parameters $\theta$, and $M_\pi$ is the successor representation given $\pi$.
\end{theorem}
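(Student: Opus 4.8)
The plan is to prove the Occupancy Gradient Theorem by differentiating the Bellman flow characterization of $\omega_\pi$ with respect to $\theta$, then solving the resulting linear recursion using the successor representation as the inverse operator. Concretely, I would start from the recursive identity established in Proposition~\ref{thm:bellman-flow} (in its state-action form, before summing over actions), namely
\begin{equation*}
\omega_\pi(s,a) = \pi_\theta(a|s)\Big[(1-\gamma)\mu(s) + \gamma\sum_{s',a'}P(s|s',a')\,\omega_\pi(s',a')\Big].
\end{equation*}
Applying $\nabla_\theta$ and the product rule splits the derivative into two pieces: a ``direct'' term where the gradient hits $\pi_\theta(a|s)$ explicitly, and an ``indirect'' term where it passes through to $\omega_\pi(s',a')$ inside the sum. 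Using the log-derivative identity $\nabla_\theta\pi_\theta(a|s) = \pi_\theta(a|s)\nabla_\theta\log\pi_\theta(a|s)$, the direct term becomes $\nabla_\theta\log\pi_\theta(a|s)\cdot\omega_\pi(s,a)$, and the indirect term is $\gamma\sum_{s',a'}\pi_\theta(a|s)P(s|s',a')\nabla_\theta\omega_\pi(s',a')$.

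Next I would recognize this as a fixed-point equation for the unknown $\nabla_\theta\omega_\pi$. Writing $G(s,a) \coloneqq \nabla_\theta\omega_\pi(s,a)$ and $b(s,a) \coloneqq \omega_\pi(s,a)\nabla_\theta\log\pi_\theta(a|s)$, the relation reads $G = b + \gamma (P^\pi)^\top G$ (with the appropriate indexing convention for the policy-weighted transition operator $P^\pi(s,a \mid s',a') = \pi_\theta(a|s)P(s|s',a')$). Since $0\le\gamma<1$, the operator $I - \gamma (P^\pi)^\top$ is invertible with Neumann series $\sum_{k\ge0}(\gamma (P^\pi)^\top)^k$, and its inverse is exactly the transpose of the successor representation matrix $\mathbf M^\pi = (I-\gamma \mathbf P^\pi)^{-1}$ introduced earlier in the appendix. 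Solving gives $G(s,a) = \sum_{s',a'} M_\pi(s,a \mid s',a')\, b(s',a')$; substituting back the definition of $b$ and writing the sum against $\omega_\pi$ as a $(1-\gamma)$-scaled expectation over $s',a'\sim\omega_\pi$ (using that $\omega_\pi = (1-\gamma)\times(\text{state-action visitation count})$, or equivalently rewriting $\sum_{s',a'}M_\pi(\cdot|s',a')\,\omega_\pi(s',a')\,\nabla_\theta\log\pi_\theta(a'|s') = (1-\gamma)\,\mathbb{E}_{s',a'\sim\omega_\pi}[\nabla_\theta\log\pi_\theta(a'|s')\,M_\pi(\cdot|s',a')]$) yields precisely \eqref{eq:occupancy-gradient}.

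The main obstacle I anticipate is bookkeeping with the indexing and transpose conventions: the successor representation as defined conditions on the \emph{starting} pair $(s',a')$ and propagates \emph{forward} to $(s,a)$, whereas the recursion for $\nabla_\theta\omega_\pi$ naturally propagates mass \emph{backward} through $P^\pi$, so one must be careful that the inverse operator appearing is $(\mathbf M^\pi)^\top$ expressed in the right coordinates — which, after unwinding, is what produces the ``conditioning on the sampled pair $(s',a')$'' structure of the final formula. A secondary technical point is justifying the interchange of $\nabla_\theta$ with the infinite sum / expectation implicit in $\omega_\pi$; this is handled exactly as in Lemma~\ref{eq:discounted-expectation}, by uniform convergence of the geometric series on the finite state-action space (so all quantities are bounded and smooth in $\theta$ wherever $\pi_\theta$ is). Once the indexing is pinned down, the remaining steps are the routine Neumann-series inversion and rewriting the finite sum as the stated expectation.
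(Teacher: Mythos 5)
Your route is sound and genuinely different from the paper's. The paper's argument starts from $\omega_\pi(s,a)=(1-\gamma)\,\mathbb{E}_{s_0\sim\mu}\sum_{a_0}\pi(a_0|s_0)M(s,a|s_0,a_0)$, applies the product rule, and repeatedly substitutes the backward Bellman equation for $M$ to unroll the gradient into a discounted sum over trajectory steps, which is then collapsed into an occupancy expectation via Lemma~\ref{eq:discounted-expectation}. You instead differentiate the Bellman flow fixed-point identity for $\omega_\pi$ itself and solve the resulting linear equation $G=b+\gamma TG$ by Neumann-series inversion, with $T_{(s,a),(s',a')}=\pi(a|s)P(s|s',a')$ and $b(s,a)=\omega_\pi(s,a)\nabla_\theta\log\pi_\theta(a|s)$; the inverse $(I-\gamma T)^{-1}$ is precisely the matrix whose $((s,a),(s',a'))$ entry is the discounted visitation of $(s,a)$ starting from $(s',a')$, i.e.\ the successor representation in the orientation actually used in the theorem (the transpose of the convention in the Definition --- your caution about indexing is warranted and your resolution is the correct one). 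This is more algebraic than the paper's telescoping computation, and it reduces the interchange of $\nabla_\theta$ with the infinite sum to the invertibility of $I-\gamma T$ for $\gamma<1$.

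The one genuine flaw is your final normalization step. Since $\omega_\pi$ is a normalized probability measure, $\sum_{s,a}\omega_\pi(s,a)=1$, the sum $\sum_{s',a'}M_\pi(s,a|s',a')\,\omega_\pi(s',a')\,\nabla_\theta\log\pi_\theta(a'|s')$ is \emph{exactly} $\mathbb{E}_{s',a'\sim\omega_\pi}[\nabla_\theta\log\pi_\theta(a'|s')M_\pi(s,a|s',a')]$, with no $(1-\gamma)$ factor; the rescaling you invoke would be appropriate only if $b$ were written against the unnormalized visitation count $\omega_\pi/(1-\gamma)$, and then it would produce a factor $(1-\gamma)^{-1}$, not $(1-\gamma)$. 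Carried out correctly, your derivation therefore yields the identity \emph{without} the $(1-\gamma)$ prefactor. A single-state sanity check confirms this is the right constant: with one state, $\omega_\pi(a)=\pi(a)$ and $M_\pi(a|a')=\delta_{a,a'}+\tfrac{\gamma}{1-\gamma}\pi(a)$, so $\mathbb{E}_{a'\sim\pi}[\nabla_\theta\log\pi(a')M_\pi(a|a')]=\nabla_\theta\pi(a)$ on the nose, whereas the displayed right-hand side gives $(1-\gamma)\nabla_\theta\pi(a)$. So do not insert the spurious factor to force agreement with \eqref{eq:occupancy-gradient}; the discrepancy lies in the stated constant (and in the constants of the reference derivation), not in your argument.
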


\section{Nonlinear MDPs}\label{app:nonlinear-mdps}
We now extend the above framework to nonlinear MDPs (N-MDPs) with nonlinear utilities and constraints:
\begin{equation}\label{eq:n-mdp-app}
    \boldsymbol{\omega^*} \in \arg \max_{f\boldsymbol{\omega}\in\Omega}\, \{f(\boldsymbol{\omega})
    \mid g(\boldsymbol{\omega})\leq 0\},
\end{equation}
where $\Omega = \{\boldsymbol{\omega} \in \Delta(\sS\times\sA),\ \boldsymbol{B}\boldsymbol{\omega}=\boldsymbol{\mu}\}$ from Eq. \ref{eq:lin-mdp-vector}.
Further, we assume that $f:\Omega \to \mathbb{R}$ is at least once continuously differentiable and $g:\Omega \to \mathbb{R}$ at least twice.
The following result is the basis for the first-order equivalence between occupancy-space mirror descent and proximal actor-critic methods.

\begin{lemma}[Utility Gradient]
    Given an \ref{eq:n-mdp}, the gradient of the nonlinear differentiable utility $f$ w.r.t the policy parameters $\theta$ is given by
    \begin{equation}\label{eq:utility-gradient}
        \nabla_\theta f(\omega_\theta) = \mathbb{E}_{s,a\sim\omega_\pi}[\log \pi_\theta(a|s) A_\pi(s,a)]
    \end{equation}
    where $A_\theta(s,a)$ is the advantage function of the linear MDP with reward functional $r_\pi = \textup{d}f_\pi$.
\end{lemma}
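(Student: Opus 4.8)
The plan is to reduce the statement to the Occupancy Gradient Theorem (Eq.~\ref{eq:occupancy-gradient}) by the chain rule, and then to recognize a successor-representation contraction as the $Q$-function of the linearized MDP. Since $f$ is continuously differentiable on $\Omega$ and $\theta\mapsto\omega_\theta$ is smooth, the chain rule gives
\begin{equation}
\nabla_\theta f(\omega_\theta) = \sum_{s,a}\frac{\partial f}{\partial\omega}(\omega_\pi)(s,a)\,\nabla_\theta\omega_\pi(s,a) = \langle \textup{d}f_\pi,\ \nabla_\theta\omega_\pi\rangle ,
\end{equation}
where $\textup{d}f_\pi = r_\pi$ is the differential evaluated at the \emph{current} iterate and then held fixed; this is the exact gradient, not an approximation, and the pairing is well defined even though $\nabla_\omega f$ is only determined up to the normal space of the polytope $\Omega$, because $\nabla_\theta\omega_\pi$ lies in the tangent space of $\Omega$.

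Next I would substitute Eq.~\ref{eq:occupancy-gradient} for $\nabla_\theta\omega_\pi(s,a)$ and, using that the sum over $(s,a)$ is finite, move it inside the expectation over the sampling indices $(s',a')\sim\omega_\pi$ to obtain
\begin{equation}
\nabla_\theta f(\omega_\theta) = (1-\gamma)\,\mathbb{E}_{s',a'\sim\omega_\pi}\!\left[\nabla_\theta\log\pi_\theta(a'|s')\sum_{s,a} r_\pi(s,a)\,M_\pi(s,a|s',a')\right].
\end{equation}
The inner contraction is exactly the on-policy $Q$-function of the linear MDP whose reward is $r_\pi$, by the identity $Q_\pi(s',a') = \sum_{s,a} r_\pi(s,a) M_\pi(s,a|s',a')$ recorded in the successor-representation section above. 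This already yields a policy-gradient-type expression in $Q_\pi$.

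Finally I would pass from $Q_\pi$ to the advantage $A_\pi = Q_\pi - V_\pi$ by the standard baseline argument: $\omega_\pi(s',a')$ factors as the state-occupancy times $\pi_\theta(a'|s')$, and $\sum_{a'}\pi_\theta(a'|s')\,\nabla_\theta\log\pi_\theta(a'|s') = \nabla_\theta\sum_{a'}\pi_\theta(a'|s') = 0$, so subtracting any state-only baseline $V_\pi(s')$ leaves the expectation unchanged. Tracking the $(1-\gamma)$ prefactor according to the normalization convention used for $\omega_\pi$ and $A_\pi$ then gives the claimed identity.

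The main obstacle is the rigorous interchange of $\nabla_\theta$ with the infinite discounted time-sum hidden in $\omega_\pi$ and $M_\pi$ — but this is precisely what the Occupancy Gradient Theorem, which we may assume, already takes care of (it rests on absolute/uniform convergence of the discounted series for $0\le\gamma<1$ on the finite state-action space). The only genuinely new points to check are (i) that the chain rule produces no extra term from the $\pi$-dependence of $\textup{d}f_\pi$, i.e. that treating $r_\pi$ as the fixed reward of a linear MDP at each $\pi$ is legitimate — it is, since $\nabla_\omega f$ is by definition evaluated at $\omega_\pi$ — and (ii) keeping the normalization constants consistent so that the final expression is in advantage form with no stray $(1-\gamma)$.
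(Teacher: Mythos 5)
Your proposal is correct and follows essentially the same route as the paper's proof: chain rule through $\theta\mapsto\omega_{\pi_\theta}\mapsto f$, substitution of the Occupancy Gradient Theorem, identification of $\sum_{s,a}r_\pi(s,a)M_\pi(s,a|s',a')$ with the on-policy $Q$-function, and a state-dependent baseline to pass to the advantage. You are in fact somewhat more careful than the paper on two points it glosses over — justifying the baseline via the score-function identity and flagging the $(1-\gamma)$ normalization mismatch between the derived expression and the stated lemma.
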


\begin{proof}
    We apply the chain rule to the gradient of $\theta\mapsto\omega_{\pi_\theta}\mapsto f $ as by \citep{zhang2020variational}. 
    Note that the partial derivatives $\partial_\omega f$ are the components of the reward functional from the preceding lemma. 
    Substituting $\nabla_\theta \omega$ using the occupancy gradient theorem from Eq. \ref{eq:occupancy-gradient}, we obtain
    \begin{align}
    \nabla_\theta f(\omega_{\pi_\theta}) 
    &= \sum_{s,a}\frac{\partial f}{\partial \omega(s,a)} \nabla_{\theta} \omega (s,a)\\
    &= (1-\gamma)\mathbb{E}_{s',a' \sim \omega} \left[ \nabla_\theta \log\pi(a'|s')\sum_{s,a} M_\pi(s,a|s',a')r_\pi(s,a)\right]\\
    &= (1-\gamma)\mathbb{E}_{s',a' \sim \omega} \left[ \nabla_\theta \log\pi(a'|s')\sum_{s,a} Q_\pi(s',a')\right],
    \end{align}
    where $Q_\pi$ is the on-policy Q-Function for policy $\pi$ and reward $r_\pi(s,a) = \textup{d}f_\pi$.
    The final result follows from adding the on-policy value function as a state-dependent baseline:
    \begin{align}
    \nabla_\theta f(\omega_{\pi_\theta}) &= (1-\gamma)\mathbb{E}_{s',a' \sim \omega} \left[ \nabla_\theta \log\pi(a'|s')Q_\pi(s',a')\right]\\
    &= (1-\gamma)\mathbb{E}_{s',a' \sim \omega} \left[ \nabla_\theta \log\pi(a'|s')Q_\pi(s',a') - V_\pi(s')\right]\\
    &= (1-\gamma)\mathbb{E}_{s',a' \sim \omega} \left[ \nabla_\theta \log\pi(a'|s')A_\pi(s',a')\right]
\end{align}
\end{proof}

\subsection{Examples of Nonlinear MDPs}
Nonlinear MDPs cover some practically relevant control learning objectives, such as behavioral mutual information, state entropy, divergence-based imitation learning, and control-as-inference objectives, see \citep{zahavy2021reward}.
Some policy learning objectives consider mixtures of two or more policies, and we are going to take a geometric perspective on characterizing them.
Therefore, we consider the Burbeo-Rao divergence, a common symmetrization of the Bregman divergence, and its extension to mixtures of probability distributions~\citep{nielsen2011burbea, burbea2003convexity}.

\begin{definition}[Dispersion of a policy mixture]
    Let us consider a set of policies $\pi_i$ with corresponding state-action occupancies $\omega_i(s,a)$, and let 
%\begin{equation}
%    \bar\pi(a|s) = \frac{\bar\p(s,a)}{\sum_{a'}\bard_{\pi_i}(s,a')} = \frac{\sum_{i}z_id_{\pi_i}(s,a)}{\sum_{j}z_j\sum_{a'}d_{\pi_j}(s,a)}
%\end{equation}
\begin{equation}
    \omega(s,a) = \sum_{i} z_i \omega_i(s,a), \text{ where } z_i \geq 0, \sum_{i} z_i = 1,
\end{equation}
be a finite mixture of occupancies, see e.g.~\citep{peng2019advantageweightedregressionsimplescalable, laroche23occupancy}, and let $\phi$ be a Legendre-type function on $\Omega$. 
The \emph{$\phi$-dispersion} of a policy mixture $\bar{\boldsymbol{\pi}} = \{\pi_1, \dots, \pi_N\}$ is defined as
\begin{equation}
    d_\phi(\bar{\boldsymbol{\pi}}) = \sum_i z_i \phi(\omega_i) - \phi(\omega),
\end{equation}
i.e. the Burbea-Rao divergence of $\omega$ generated by $\phi$.
\end{definition}

The dispersion can be used to compactly characterize popular reward-free objectives like DIAYN \citep{gregor2016variational, eysenbach2018diversity}, and GAIL \citep{ho2016generative}. 

It can be shown that decision problems with objectives of this form correspond to convex MDPs~\citep{zahavy2021reward}, which are known to be solvable using policy optimization methods. To see this, consider the following properties of the polic dispersion.
\begin{restatable}[Jensen-Bregman]{proposition}{avgBregman}
    The \emph{$\phi$-dispersion} of a policy mixture $\bar{\boldsymbol{\pi}} = \{\pi_1, \dots, \pi_N\}$ with respective occupancy mixture $\bar{\omega} = \sum_i z_i \omega_i$ can be expressed as
    \begin{equation}\label{eq:bregman-jensen}
        d_\phi(\boldsymbol{\bar\pi}) = \mathbb{E}_z[\mathrm{D}_h(\omega_z,\bar\omega)],
    \end{equation}
    which is an average of Bregman divergences of the same generator, and reduces to the mutual information between $z$ and $(s,a)$ for the negative conditional entropy. 
    Further, it is jointly strictly convex in $\omega_z$ for strictly convex $h$.
\end{restatable}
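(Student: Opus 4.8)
The plan is to unpack the definition of the $\phi$-dispersion and recognize it as a Jensen gap for the convex function $\phi$, then rewrite that gap as an average of Bregman divergences by a standard telescoping identity. Concretely, write $d_\phi(\bar{\boldsymbol\pi}) = \sum_i z_i \phi(\omega_i) - \phi(\bar\omega)$ with $\bar\omega = \sum_i z_i \omega_i$. The key algebraic step is the identity
\begin{equation}
\sum_i z_i \phi(\omega_i) - \phi(\bar\omega) = \sum_i z_i\bigl[\phi(\omega_i) - \phi(\bar\omega) - \langle \nabla\phi(\bar\omega),\, \omega_i - \bar\omega\rangle\bigr],
\end{equation}
which holds because $\sum_i z_i(\omega_i - \bar\omega) = 0$, so the linear correction term sums to zero and can be inserted for free. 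The bracketed expression is exactly the Bregman divergence $\mathrm{D}_\phi(\omega_i, \bar\omega)$ (with $h = \phi$ in the notation of the statement), so the right-hand side is $\mathbb{E}_z[\mathrm{D}_h(\omega_z, \bar\omega)]$, establishing \eqref{eq:bregman-jensen}. I would note that $\nabla\phi$ is well-defined on the relative interior since $\phi$ is Legendre-type, and that the mixture $\bar\omega$ lies in the (relatively) open domain whenever the $\omega_i$ do, so the Bregman divergence is well-posed.

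For the specialization to mutual information, I would take $\phi$ to be the negative conditional entropy potential $\phi(\omega) = \sum_{s,a}\omega(s,a)\log\pi(a|s)$ referenced earlier in the paper — or, more directly, the negative Shannon entropy of the state-action distribution — and observe that $\mathrm{D}_\phi(\cdot,\cdot)$ is then the KL divergence. Then $\mathbb{E}_z[\mathrm{D}_h(\omega_z,\bar\omega)] = \sum_i z_i\, \mathrm{KL}(\omega_i \,\|\, \bar\omega)$, which is precisely the definition of the mutual information $I(z; (s,a))$ when $z$ has distribution $(z_i)$ and the conditional law of $(s,a)$ given $z=i$ is $\omega_i$. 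This is the classical "mutual information as expected KL to the mixture" identity, so it only requires pointing to the right choice of generator and citing that fact.

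For joint strict convexity in the $\omega_z$ (i.e., in the tuple $(\omega_1,\dots,\omega_N)$, or as a function of the mixture components), I would argue as follows: the map $(\omega_1,\dots,\omega_N) \mapsto \sum_i z_i \phi(\omega_i)$ is a sum of strictly convex functions in disjoint blocks of variables, hence strictly convex jointly; the map $(\omega_1,\dots,\omega_N)\mapsto \phi(\bar\omega) = \phi(\sum_i z_i\omega_i)$ is convex (composition of the linear averaging map with the convex $\phi$) but we are subtracting it, so strict convexity is not immediate from this decomposition alone. The cleaner route is to use the Bregman representation directly: for fixed weights $z$, $d_\phi$ as a function of $(\omega_i)$ has Hessian equal to $\sum_i z_i[\nabla^2\phi(\omega_i) - (\text{cross terms through }\bar\omega)]$; one checks this Hessian is positive definite by noting that $\sum_i z_i \mathrm{D}_\phi(\omega_i,\bar\omega)$ vanishes iff all $\omega_i$ are equal, and is otherwise strictly positive, combined with the fact that along the diagonal $\omega_1 = \dots = \omega_N$ the function is identically zero while $\phi$ restricted to each block is strictly convex — so any nondegenerate perturbation strictly increases it. I expect this strict-convexity claim to be the main obstacle: the naive "strictly convex minus convex" bound fails, and one must be careful about the directions tangent to the diagonal. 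The resolution is to verify positive-definiteness of the Hessian of $(\omega_i)\mapsto \sum_i z_i\mathrm{D}_\phi(\omega_i,\bar\omega)$ restricted to the affine constraint set, which follows from strict convexity of $\phi$ together with the observation that the averaging operator $(\omega_i)\mapsto\bar\omega$ is a strict contraction off the diagonal in the metric induced by $\nabla^2\phi$ (a discrete Jensen-type strict inequality). I would carry out this Hessian computation explicitly only to the point of exhibiting it as a Gram-type matrix that is manifestly nonnegative and degenerate exactly on the diagonal.
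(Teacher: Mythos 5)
Your derivation of the identity $d_\phi(\bar{\boldsymbol\pi}) = \sum_i z_i D_h(\omega_i\|\bar\omega)$ by inserting the linear correction $\langle\nabla h(\bar\omega),\omega_i-\bar\omega\rangle$, which averages to zero, is exactly the paper's argument, and your reduction of the entropy case to mutual information via $\sum_i z_i \KL(\omega_i\|\bar\omega) = I(z;s,a)$ matches the paper's computation as well. Those two parts are fine.

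The gap is in the joint strict convexity claim. Write $F(\omega_1,\dots,\omega_N)=\sum_i z_i h(\omega_i)-h(\bar\omega)$; its Hessian quadratic form in directions $(v_1,\dots,v_N)$ is
\begin{equation*}
\sum_i z_i\, v_i^\top \nabla^2 h(\omega_i)\, v_i \;-\; \bar v^\top \nabla^2 h(\bar\omega)\,\bar v,\qquad \bar v=\textstyle\sum_i z_i v_i .
\end{equation*}
The ``discrete Jensen'' step you invoke would require all the Hessians to be evaluated at a common point; they are not. Taking $v_1=\cdots=v_N=v$, the form collapses to $v^\top\bigl(\sum_i z_i\nabla^2 h(\omega_i)-\nabla^2 h(\bar\omega)\bigr)v$, which is negative whenever $\nabla^2 h$ is strictly concave along the segment spanned by the $\omega_i$ (e.g.\ the one-dimensional $h(x)=\tfrac{4}{15}x^{5/2}$), so the statement is false for arbitrary strictly convex $h$. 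What is actually needed is joint convexity of $D_h$ in \emph{both} arguments (a Bauschke--Borwein--type condition on $h$), which the negative (conditional) entropy satisfies; that is the only case the paper's own (also somewhat informal) proof covers, via joint convexity of the KL divergence. Moreover, even in the KL case \emph{strict} joint convexity fails: $F$ vanishes identically on the diagonal $\{\omega_1=\cdots=\omega_N\}$, an affine set of positive dimension, so $F$ is constant along a line segment and cannot be strictly convex --- your own remark that the Hessian is ``degenerate exactly on the diagonal'' already contradicts the conclusion you are trying to prove. The defensible statement is: $F$ is jointly convex whenever $D_h$ is jointly convex (in particular for the negative entropy), with strictness only transverse to the diagonal. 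You correctly identified the obstacle, but the proposed resolution (the averaging map being a contraction in the $\nabla^2\phi$ metric) does not repair it.
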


\begin{proof}
    \begin{align}
        \sum_i z_i D_h(\omega_i || \bar\omega) &= \sum_i z_i h(\omega_i) - \sum_i z_i h(\bar\omega) - \sum_i z_i \nabla h(\bar\omega)(\omega_i - \bar\omega) \\
        &= \sum_i z_i h(\omega_i) - \sum_i z_i h(\bar\omega) - \nabla h(\bar\omega)(\bar\omega - \bar\omega) \\
        &= \sum_i z_i h(\omega_i) - h\left(\sum_i z_i\omega_i\right).
    \end{align}
    The second claim follows from the definition of mutual information between $(s,a)$-realizations following $\bar\omega$, and the skill $z$:
    \begin{align}
        I(z;s,a) 
            &= H(\bar\omega)-H(\bar\omega|z) \\
            &= H(\bar\omega)-\sum_i z_i H(\omega_i) \\
            &= \sum_i z_i\sum\omega_i\log\omega_i - \sum\bar\omega\log\bar\omega\\
            &= \sum_i z_i\left(\sum\omega_i\log\omega_i - \sum\omega_i\log\bar\omega\right)\\
            &= \sum_i z_iD_{KL}(\omega_i || \bar\omega),
    \end{align}
    which by the above identity is the MPD of the negative conditional entropy.
    Joint convexity follows from the convexity of $D_{KL}(\cdot || \bar\omega)$ and the last line being a convex combination of convex functions.
\end{proof}

In the context of information geometry, $\bar\omega$ is also known as the Bregman centroid~\citep{nielsen2011burbea}. 
%For specific choices of the mixture distributions, a variety of known RL utilities can be instantiated as variants of the JK divergence, all of which are automatically convex MDPs by Eq. \ref{eq:bregman-jensen}.

In the following, we demonstrate some examples in theory and in a small scale simulation, see Figure~\ref{fig:twostate}.

\begin{figure}
    \includegraphics[width=\textwidth]{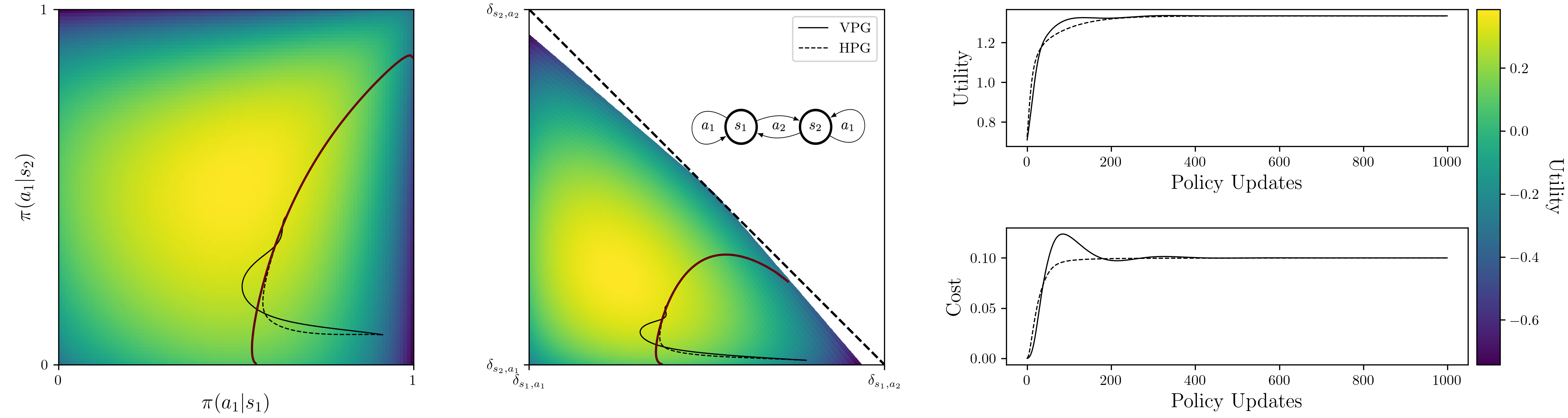}
    \caption{A constrained diversity problem in an MDP with two states and two actions (middle figure, top right), in policy space (left), in linearly projected occupancy space (middle), and a comparison between Lagrangian Vanilla Policy Gradient (VPG) and Hessian Policy Gradient (HPG) in terms of sample efficiency curves.
    The utility functional is discounted state-action entropy in bits.}
    \label{fig:twostate}
\end{figure}

\begin{example}[GAIL~\citep{ho2016generative}]
    Set $N=2$ and $\bar\p(s,a)=\frac{1}{2}\p^\pi(s,a) + \frac{1}{2}\p^E(s,a)$, then 
    \begin{align}
        \max_\pi d_{-H}(\pi, \pi^E),
    \end{align}
    where $\pi^E$ is an expert policy and $-H$ is the negative entropy, is a nonlinear MDP and a \emph{convex} MDP in particular.
    The reward is $r_\pi(s,a) = \sum_i z_i [\log z_i - \log p(i|s, a)]$, which is approximated using an adversarial objective in practice.
\end{example}

\begin{example}[DIAYN~\citep{gregor2016variational, eysenbach2018diversity}]
    Set $z=\operatorname{Unif}(\{1,...,N\})$, then the DIAYN~\citep{eysenbach2018diversity} objective
    \begin{align}
        \max_{\boldsymbol{\bar\pi}} d_{-H_s}(\boldsymbol{\bar\pi}),
    \end{align}
    forms a nonlinear MDP, where $-H_s$ is the negative entropy of the state occupancy $-H_s = -H[\sum_a\omega(\cdot,a)]$.
    The reward is $r_\pi(s,a) = \sum_i z_i [\log p_\pi(i|s) - \log z_i]$. In this case, $p(i|s,\pi)$ is the Bayesian posterior for $\sum_i z_i \omega_{\pi_i}(s,a)$, which is approximated with a variational objective in practice.
\end{example}

\begin{example}[Maximum Entropy Exploration~\citep{hazan2019provably}]
    Set $z=\operatorname{Unif}(\{1,...,N\})$, then the Maximum Entropy Exploration objective
    \begin{align}
        \max_\pi H[\omega^\pi],
    \end{align}
    is a nonlinear MDP, in particular a convex one.
    Here, $H$ is the entropy, and the reward is $r_\pi(s,a) = -(\log \omega_\pi(s,a) + 1)$.
\end{example}

\subsection{Implementation Details}\label{app:implementation-details}
In the computational experiments and implementations we use what we call the \emph{Hessian Policy Gradient} (HPG) updates, a generalization of NPG~\citep{kakade2001natural}, and its constrained counterparts~\citep{milosevic2024embedding} to constrained nonlinear MDPs. 
The HPG update is defined as
\begin{equation}\label{eq:hpg}
\theta_{k+1} = \theta_k + \eta_k \mathbf{H}_\phi^\dagger(\theta_k) \nabla_\theta f(\omega_{\pi_{\theta_k}}),
\end{equation}
where $\mathbf{H}_\phi^\dagger$ is a pseudo-inverse of the Hessian of the potential $\phi$ with respect to the policy parameters $\theta$, and where $\nabla_\theta f(\omega_{\pi_{\theta_k}})$ is estimated as the policy gradient for the intrinsic reward at $\pi$. 
In our small scale experiments, it was sufficient to use out-of-the-box auto-differentiation (pytorch) for computing the Hessian and gradient, and a least-squares solver for the pseudo-inverse.

\paragraph{Example: Constrained Diversity}
In the toy experiments, we demonstrate that policy gradient convergence greatly benefits from Hessian updates, especially in nonlinear constrained settings.
Many practical objectives combine diversity with quality constraints~\citep{kumar2020one, zahavy2021discovering, zahavy2022discovering, sun2024prompt, kolev2025dual, grillotti2024quality} in a constrained policy optimization problem of the form
\begin{equation}
\max_{\boldsymbol\pi} \ \mathrm{Diversity}(\boldsymbol\pi) \ \text{s.t.} \ \mathrm{Quality}(\boldsymbol\pi) \ge \alpha Q^*.
\end{equation}
These metrics are often expressible as $f$ and $g$ in problem ~\ref{eq:n-mdp}, allowing direct application of the Hessian approach. 
In the gridworld experiment in Figure~\ref{fig:gridworld_cmdp}, we chose the DIAYN objective with two policies as the diversity metric and the GAIL objective with a threshold of 0.1 as the quality metric.
In the two dimensional experiments in Figure~\ref{fig:twostate}, we chose the maximum entropy objective as the diversity metric instead.

\subsection{Mirror Descent Equivalence}\label{app:mirror-descent-equivalence}

First, we give a short introduction to Bregman divergences, which are part of the definition of mirror descent. 
For this, we consider a Lagendre potential $\phi$ over a convex subset of Euclidean space $C\subseteq\mathbb R^d$ with a non-empty interior $\operatorname{int}(C)$. 
Then, the \emph{Bregman divergence} induced by $\phi$ is
\begin{align}
    D_\phi(x||y) \coloneqq \phi(x) - \phi(y) - \nabla \phi(y)^\top(x-y),
\end{align}
which is well defined for $x\in C, y\in\operatorname{int}(C)$. 
Intuitively, the Bregman divergence measures the difference between $\phi$ and its linearization at $y$. 
The strict convexity of $\phi$ ensures that $D_\phi(x||y) \ge 0$ and $D_\phi(x||y) = 0$ if and only if $x=y$. 

An important Bregman divergence is the Kullback-Leibler (KL) divergence between finite probability measures $\sum_i p_i=\sum_i q_i=1$
\begin{align}
    \KL(p||q) \coloneqq \sum_{i=1}^d p_i \log \frac{p_i}{q_i}. 
\end{align}

\begin{proposition}
Let the Policy Mirror Descent (PMD) objective be
\begin{equation}
    J_{\textup{PMD}}(\pi) = \langle \nabla f(\omega_k), \omega_\pi - \omega_k \rangle - \frac{1}{\eta} D_\phi(\omega_\pi \| \omega_k)
\end{equation}
and the standard surrogate objective be
\begin{equation}
    J_{\textup{SURR}}(\pi) = \E_{s \sim \omega_k, a \sim \pi}[A_k(s,a)] - \frac{1}{\eta} \E_{s \sim \omega_k} [\KL(\pi_k(\cdot|s) \| \pi(\cdot|s))]
\end{equation}
where the advantage function $A_k$ is computed using the intrinsic reward $r_k := \partial f(\omega_k)/\partial \omega$, and the potential function $\phi$ is the negative conditional entropy, $\phi(\omega) = \sum_{s,a} \omega(s,a) \log \omega(s,a)/\sum_a \omega(s,a)$.
Then, the Taylor expansions of $J_{\textup{PMD}}(\pi_\theta)$ and $J_{\textup{SURR}}(\pi_\theta)$ around the current policy parameters $\theta_k$ are identical up to first order.
Furthermore, the regularizers are identical up to second order.
\end{proposition}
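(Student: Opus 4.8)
The plan is to compare the two objectives term by term at the base point $\theta_k$, controlling value, gradient, and Hessian, and then reassemble the Taylor expansions. I work with the normalized occupancy so that $\langle \nabla f(\omega_k), \omega_\pi\rangle = \mathbb{E}_{s,a\sim\omega_\pi}[r_k(s,a)]$ for $r_k = \partial f(\omega_k)/\partial\omega$, and I use that $\theta\mapsto\omega_{\pi_\theta}$ is continuously differentiable, with Jacobian supplied by the Occupancy Gradient Theorem, so in particular $\omega_{\pi_\theta} - \omega_k = O(\lVert\theta-\theta_k\rVert)$. The comparison splits naturally into the ``linear'' (advantage) part and the regularizer part.

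For the linear parts: at $\theta = \theta_k$ both vanish, since $\langle\nabla f(\omega_k), \omega_k - \omega_k\rangle = 0$ and the advantage has zero mean under its own policy, $\mathbb{E}_{s\sim\omega_k, a\sim\pi_k}[A_k(s,a)] = 0$. For the gradients at $\theta_k$: the $-\omega_k$ offset is constant in $\theta$, so $\nabla_\theta\langle\nabla f(\omega_k),\omega_{\pi_\theta}\rangle|_{\theta_k} = \sum_{s,a} r_k(s,a)\nabla_\theta\omega_{\pi_\theta}(s,a)|_{\theta_k}$, which by the Occupancy Gradient Theorem (and adding the value baseline) equals $(1-\gamma)\,\mathbb{E}_{s,a\sim\omega_k}[\nabla_\theta\log\pi_k(a|s)\,A_k(s,a)]$; this is exactly the policy gradient of the Utility Gradient lemma in Eq.~\ref{eq:utility-gradient}. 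On the surrogate side $\omega_k$ is held fixed, so $\nabla_\theta\mathbb{E}_{s\sim\omega_k, a\sim\pi_\theta}[A_k(s,a)]|_{\theta_k} = \mathbb{E}_{s\sim\omega_k}\big[\sum_a \pi_k(a|s)\nabla_\theta\log\pi_k(a|s)\,A_k(s,a)\big]$, the same expression (up to the $(1-\gamma)$ normalization). Hence the linear parts have the same first-order Taylor expansion at $\theta_k$.

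For the regularizers, first record the Bregman identity for $\phi$ the negative conditional entropy: since $\partial\phi/\partial\omega(s,a) = \log\pi_\omega(a|s)$ with $\pi_\omega(a|s) = \omega(s,a)/\sum_{a'}\omega(s,a')$, a direct expansion collapses to $D_\phi(\omega_\pi\|\omega_k) = \sum_{s,a}\omega_\pi(s,a)\log\frac{\pi(a|s)}{\pi_k(a|s)} = \mathbb{E}_{s\sim\omega_\pi}[\KL(\pi(\cdot|s)\|\pi_k(\cdot|s))]$, which is the statement attributed to \citep{neu2017unified}. Thus the PMD regularizer is $\mathbb{E}_{s\sim\omega_\pi}[\KL(\pi\|\pi_k)]$ and the surrogate regularizer is $\mathbb{E}_{s\sim\omega_k}[\KL(\pi_k\|\pi)]$; they differ only in the outer sampling distribution and in the orientation of the KL. Both vanish at $\theta_k$, and their $\theta$-gradients vanish there too (a Bregman divergence, in particular a KL, has zero derivative at its base point). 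For the Hessians, with $u = \theta-\theta_k$ one has for each fixed $s$ that $\KL(\pi_k(\cdot|s)\|\pi_\theta(\cdot|s)) = \tfrac12 u^\top F_k(s) u + O(\lVert u\rVert^3)$ and $\KL(\pi_\theta(\cdot|s)\|\pi_k(\cdot|s)) = \tfrac12 u^\top F_k(s) u + O(\lVert u\rVert^3)$ with the \emph{same} Fisher matrix $F_k(s) = \mathbb{E}_{a\sim\pi_k}[\nabla_\theta\log\pi_k(a|s)\nabla_\theta\log\pi_k(a|s)^\top]$ — the cubic skew terms that distinguish the two orientations do not contribute through second order. Averaging over $s\sim\omega_k$ keeps this agreement, and swapping the outer distribution to $s\sim\omega_\pi$ perturbs a quantity that is already $O(\lVert u\rVert^2)$ by a factor $O(\lVert u\rVert)$, hence only at order $O(\lVert u\rVert^3)$. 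Therefore the two regularizers share the same second-order Taylor expansion at $\theta_k$; a fortiori they agree to first order, and together with the linear-part analysis the full objectives $J_{\textup{PMD}}(\pi_\theta)$ and $J_{\textup{SURR}}(\pi_\theta)$ agree to first order.

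The routine steps are the Bregman identity and the zeroth/first-order bookkeeping. The step requiring genuine care — and the main obstacle — is the second-order comparison of the regularizers: one must verify carefully that the two KL orientations share the same Hessian at $\theta_k$ (an expansion in which the antisymmetric cubic contributions must be seen to cancel at second order), and that the mismatch between the outer state distributions $\omega_\pi$ and $\omega_k$ is genuinely third order, which hinges on the local Lipschitz continuity of $\theta\mapsto\omega_{\pi_\theta}$ guaranteed by the Occupancy Gradient Theorem. I would also keep explicit track of the $(1-\gamma)^{-1}$ normalization so that the performance-difference and policy-gradient identities match the surrogate exactly.
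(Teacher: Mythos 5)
Your proof is correct and reaches the same conclusions, but it takes a noticeably different route on the key step, the second-order comparison of the regularizers. The paper computes the Hessian of the Bregman term abstractly, via the pullback formula $\nabla^2_\theta D_\phi(\omega_\pi\|\omega_k)|_{\theta_k} = J_\omega(\theta_k)^\top H_\phi(\omega_k) J_\omega(\theta_k)$, and then identifies this product with the state-averaged Fisher matrix $F_k = \E_{s\sim\omega_k}[F_s(\theta_k)]$ for the conditional-entropy potential; the KL Hessian on the surrogate side is quoted as the same $F_k$, and the match is immediate. You instead collapse the Bregman divergence to its exact closed form $D_\phi(\omega_\pi\|\omega_k) = \E_{s\sim\omega_\pi}[\KL(\pi(\cdot|s)\|\pi_k(\cdot|s))]$ (the identity from \citet{neu2017unified}, which your derivative computation $\partial\phi/\partial\omega(s,a)=\log\pi_\omega(a|s)$ correctly establishes) and then account separately for the two discrepancies with the surrogate regularizer: the reversed KL orientation (same Hessian at the base point, differing only in the antisymmetric cubic terms) and the mismatched outer state distribution ($\omega_\pi$ versus $\omega_k$, a third-order effect since the integrand is already second order). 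Your version is more elementary and more transparent --- it makes explicit exactly \emph{where} the two objectives differ and at what order --- whereas the paper's pullback argument is shorter and generalizes verbatim to any Legendre potential $\phi$, not just the conditional entropy. The linear/advantage part of your argument coincides with the paper's (both route through the Utility Gradient lemma and the Occupancy Gradient Theorem), and your care about the $(1-\gamma)$ normalization is warranted; the paper is itself loose on that constant. No gaps.
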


We will show this by explicitly computing the gradients and Hessians of both objectives, evaluated at $\theta = \theta_k$. 
Let $\pi_k = \pi_{\theta_k}$ and $\omega_k = d_{\pi_k}$. We rely on the following properties of the KL-divegence:
\begin{align*}
    \nabla_\theta \KL(\pi_k \| \pi_\theta) |_{\theta_k} &= 0 \\
    \nabla^2_\theta \KL(\pi_k \| \pi_\theta) |_{\theta_k} &= F_s(\theta_k)
\end{align*}
where $F_s(\theta_k)$ is the Fisher Information Matrix (FIM) of the policy $\pi(\cdot|s)$ at state $s$.
In general, the Bregman divergence $D_\phi(\omega_\pi \| \omega_k)$ is minimized (with value 0) at $\pi=\pi_k$. 
Its gradient and Hessian with respect to $\theta$ at $\theta_k$ are:
\begin{align*}
    \nabla_\theta D_\phi(\omega_\pi \| \omega_k) |_{\theta_k} &= 0 \\
    \nabla^2_\theta D_\phi(\omega_\pi \| \omega_k) |_{\theta_k} &= J_\omega(\theta_k)^T H_\phi(\omega_k) J_\omega(\theta_k)
\end{align*}
where $J_\omega(\theta_k) = \nabla_\theta \omega_\pi |_{\theta_k}$ is the Jacobian of $\pi\mapsto\theta$ and $H_\phi(\omega_k) = \nabla_d^2 \phi |_{\omega_k}$ is the Hessian of the potential function. For our chosen potential $\phi$, this product is the FIM weighted by the state-visitation frequencies: $J_\omega^T H_\phi J_\omega = \E_{s \sim \omega_k}[F_s(\theta_k)] = F_k$.

\paragraph{Surrogate Objective}
We compute the gradient of $J_{\textup{SURR}}$ and evaluate it at $\theta_k$:
\begin{align*}
    \nabla_\theta J_{\textup{SURR}}(\pi_\theta) |_{\theta_k} &= \nabla_\theta \E_{s \sim \omega_k, a \sim \pi_\theta}[A_k(s,a)] |_{\theta_k} - \frac{1}{\eta} \nabla_\theta \E_{s \sim \omega_k} [\KL(\pi_k \| \pi_\theta)] |_{\theta_k} \\
    &= \E_{s \sim \omega_k, a \sim \pi_k} [\nabla_\theta \log \pi_\theta(a|s)|_{\theta_k} A_k(s,a)] - \frac{1}{\eta} \cdot 0 \\
    &= \E_{s \sim \omega_k, a \sim \pi_k} [\nabla_\theta \log \pi_\theta(a|s)|_{\theta_k} A_k(s,a)]
\end{align*}
The expectation $\E_{s \sim \omega_k}$ in the KL term is constant with respect to $\theta$ as it uses the fixed distribution from the previous policy.
The Hessian of $J_{\textup{SURR}}$ at $\theta_k$ is
\begin{align*}
    \nabla^2_\theta J_{\textup{SURR}}(\pi_\theta) |_{\theta_k} &= \nabla^2_\theta \E_{s \sim \omega_k, a \sim \pi_\theta}[A_k(s,a)] |_{\theta_k} - \frac{1}{\eta} \nabla^2_\theta \E_{s \sim \omega_k} [\KL(\pi_k \| \pi_\theta)] |_{\theta_k} \\
    &= \nabla^2_\theta \E_{s \sim \omega_k, a \sim \pi_\theta}[A_k(s,a)] |_{\theta_k} - \frac{1}{\eta} \E_{s \sim \omega_k} [F_s(\theta_k)] \\
    &= \E_{s \sim \omega_k, a \sim \pi_k}[\nabla^2_\theta\log \pi(a|s)|_{\theta_k} A_k(s,a)] - \frac{1}{\eta} F_k
\end{align*}

\paragraph{Mirror Descent Objective}
We compute the gradient of $J_{\textup{PMD}}$ and evaluate it at $\theta_k$. 
Note that $\nabla f(\omega_k)$ and $\omega_k$ are constants.
\begin{align*}
    \nabla_\theta J_{\textup{PMD}}(\pi_\theta) |_{\theta_k} &= \nabla_\theta \langle \nabla f(\omega_k), \omega_\pi \rangle |_{\theta_k} - \frac{1}{\eta} \nabla_\theta D_\phi(\omega_\pi \| \omega_k) |_{\theta_k} \\
    &= J_\omega(\theta_k)^T \nabla f(\omega_k) - \frac{1}{\eta} \cdot 0 \\
    &= \nabla_\theta f(\omega_\pi)|_{\theta_k}
\end{align*}
By the Utility Gradient Lemma \ref{eq:utility-gradient}, this is exactly 
$$ \nabla_\theta J_{\textup{PMD}}(\pi_\theta) |_{\theta_k} = \E_{s \sim \omega_k, a \sim \pi_k} [\nabla_\theta \log \pi_\theta(a|s)|_{\theta_k} A_k(s,a)] $$
The gradients match.
The Hessian of $J_{\textup{PMD}}$ at $\theta_k$ is
\begin{align*}
    \nabla^2_\theta J_{\textup{PMD}}(\pi_\theta) |_{\theta_k} &= \nabla^2_\theta \langle \nabla f(\omega_k), \omega_\pi \rangle |_{\theta_k} - \frac{1}{\eta} \nabla^2_\theta D_\phi(\omega_\pi \| \omega_k) |_{\theta_k}
\end{align*}
For the second term, we use the Hessian property of the Bregman divergence:
$$ \nabla^2_\theta D_\phi(\omega_\pi \| \omega_k) |_{\theta_k} = F_k $$
The Hessians of the regularizers also match. 
The Hessian of the first term is related to the Hessian of the surrogate, though not identical. 
However, in the policy optimization literature, one makes the approximation that second order changes in the occupancy are negligible, focusing on the second-order term from the regularizer~\citep{kakade2002Approximately, schulman2017trust}.

\subsubsection{Constrained Policy Geometry}
This equivalence suggests that one can approximate arbitrary mirror descent schemes up to second order, by considering modified versions of TRPO~\citep{schulman2017trust}. 
In C-TRPO~\citep{milosevic2024embedding}, the authors consider mirror functions of the form
\begin{align}
    \Phi_{\operatorname{C}}(\omega) &=\Phi_\textup{K}(\omega) + \Phi_\textup{B}(\omega)\\
                               &\coloneqq\sum_{s,a}\omega(s,a)\log\pi_\omega(a|s)  + \sum_{i=1}^m \beta_i \ell\left(b_i-\sum_{s,a}\omega(s,a)c(s,a)\right), 
\end{align}
where $\omega\in\mathcal{K}_\textup{safe}$ is a feasible state-action occupancy, $\Phi_{\operatorname{K}}$ is the conditional entropy, and $\Phi_\textup{B}$ and $\ell\colon\mathbb R_{>0}\to\mathbb R$ are convex functions with $\ell'(x)\to+\infty$ for $x\searrow0$. 
Possible candidates for $\ell$ are $-\log(x)$ and $x\log(x)$, corresponding to a logarithmic barrier and entropy, respectively. 
This results in an \emph{intractable} policy divergence with Hessian

\begin{align*}
    H_\C(\theta) & = \mathbb{E}_{s\sim \omega_\pi}F(\theta) + \sum_i \beta_i \phi''(b_i-V_{c_i}(\theta)) \nabla_\theta^2 V_{c_i}(\theta) \Big|_{\theta = \theta_k}.
\end{align*}

However, the authors show that the following divergence with the same Hessian can be derived using the common first-order advantage approximation, but for the cost instead of the rewardx:

\begin{align}
    D_\C(\pi||\pi_k) = \bar D_{KL}(\pi||\pi_k) + \beta (\ell(B_k-\mathbb{ A}) - \ell(B_k) - \ell'(B_k)\cdot\mathbb{ A})
\end{align}
where $$\mathbb{A} = \sum_s\omega_{\pi_k}(s)\sum_a \pi(a|s) A^{\pi_k}_c(s,a)$$ is the surrogate cost advantage, and $\bar D_{KL}(\pi||\pi_k)$ is the expected KL-Divergence between the policies w.r.t $\pi_k$'s state occupancy measure. We introduce $B_k = b - V_c(\pi_k)$, i.e. the budget until constraint violation, and focus on a single constraint to reduce notational clutter.

\paragraph{Convex Constraints.} The mirror descent perspective provides a principled way to handle the convex constraints in \eqref{eq:n-mdp}. 
Any Legendre-type potential $\phi$ not only defines a Bregman divergence but also endows the manifold $\Omega$ with a Hessian geometry via its metric tensor. 
This gives rise to a generalized natural policy gradient~\citep{muller2024geometry}. 
Following~\citep{milosevic2024embedding, milosevic2025central}, we can incorporate constraints directly into the geometry by using a \emph{barrier potential}:
\[
b(\omega) = \phi(\omega) + \beta \sum_i \ell(g_i(\omega)),
\]
where $\phi$ is the original potential (e.g., negative conditional entropy) and $\ell$ is a Legendre-type barrier, such as $\ell(x) = -\log(-x)$. Performing natural policy gradient descent with respect to the Hessian metric induced by this new potential $b$ ensures that iterates remain strictly feasible while preserving convergence guarantees~\citep{alvarez2004hessian}.

\section{Hessian and Information Geometries of the Occupancy Space}\label{app:hessian-geom}

The set of all valid state-action occupancy measures for a given CMP forms the natural domain for formulating nonlinear RL problems. 
It can be characterized as
$$\Omega = \{\omega \in \mathbb{R}^{|\sS||\sA|}: \sum_a \omega(s,a) - \gamma \sum_{s',a'} P(s|s',a')\omega(s',a') = (1-\gamma)\mu(s) \quad \forall s \in S \}.$$
Geometrically, $\Omega$ is the intersection of an affine subspace of $\mathcal{R}^{|\sS||\sA|}$ and the probability simplex $\Delta_{\sS\times\sA}$. 
This subspace of $\mathbb{R}^{|\sS||\sA|}$ is defined by the set of probability measures $\omega$, that satisfy the linear Bellman flow equations. 

\paragraph{Fisher-Rao Geometry}

The ambient space, the simplex $\Delta_{\sS\times\sA}$, possesses a canonical and well-studied Riemannian structure known as the Fisher-Rao geometry.
This geometry is induced by the Hessian of the negative joint entropy potential function:
    $$ \phi_{FR}(\omega) = \sum_{s,a} \omega(s,a) \log \omega(s,a) $$
The Hessian, $H_{\phi_{FR}}(\omega)$, defines the Fisher metric on $\Omega$.
The natural parameters corresponding to this potential are the log-probabilities of the occupancy measure itself:
    $$ \eta_{s,a} = \frac{\partial \phi_{FR}}{\partial \omega(s,a)} = \log \omega(s,a) + 1 .$$
The Bregman divergence generated by $\phi_{FR}$ is the Kullback-Leibler (KL) divergence, $D_{KL}(\omega'\|\omega)$.

When this standard geometry is restricted to the manifold $\Omega$, i.e.
$$\eta_{s,a} = \log \sum_{s',a'}M_\pi(s,a|s',a')\pi(a'|s')\mu(s'),$$
it provides a principled way to measure distances between valid occupancy measures, however the resulting exponential family is curved~\cite{amari1982differential}. 
A significant practical challenge is that this geometry requires access to evaluations of the log-occupancy measure, $\log \omega(s,a)$, which are not directly available to a learning agent that only controls its policy, and it is in general difficult to estimate from data.

\paragraph{Kakade Geometry}

In practice, we do not optimize over $\omega$ directly. Instead, we parameterize the occupancy manifold indirectly through a policy $\pi$, as the mapping from a policy to its occupancy measure, $\omega_\pi$, is a diffeomorphism under regularity conditions~\citep{muller2024geometry}. 
This provides an alternative and more practical set of coordinates.
The corresponding potential function is the \textbf{negative conditional entropy}:
$$ \phi_{K}(\omega) = \sum_{s,a} \omega(s,a) \log \pi(a|s) $$
where the policy $\pi(a|s)$ is recovered from the occupancy measure via $\pi(a|s) = \omega(s,a) / \sum_{a'} \omega(s,a')$.
The natural parameters for this geometry are the, still curved, log-probabilities of the policy:
$$ \eta_{s,a} = \frac{\partial \phi_{K}}{\partial \omega(s,a)} = \log \pi(a|s) $$
This is the geometry implicitly used by the Natural Policy Gradient~\citep{kakade2001natural} and its variants.

\paragraph{Motivation for a General Hessian Framework}

The existence of at least two distinct, natural geometries on the same occupancy manifold is a key insight. The Fisher-Rao view is theoretically pure but practically difficult, while the policy-induced view is practical and directly aligned with the learning problem. This enlarges our perspective: rather than committing to a single, fixed geometry, we should consider the choice of geometry as a degree of freedom in algorithm design.
This motivates framing policy optimization within the more general theory of \emph{Hessian structures}~\citep{shima2007geometry}. In this framework, any Legendre-type potential function $\phi$ can be used to define a valid Riemannian structure on $\Omega$. 
By carefully selecting $\phi$, e.g. to incorporate constraints via barrier functions as we did in~\citep{milosevic2024embedding}, we can design novel and more powerful policy update steps that are tailored to the specific structure of the nonlinear MDP we aim to solve.

\end{document}